\algrenewcommand{\algorithmiccomment}[1]{\hfill\ding{228} #1}
\newcommand{\alg}{\textcolor{black}{\mbox{\textsc{MaxInfoRL}}}\xspace}
\newcommand{\algsac}{\textcolor{black}{\mbox{\textsc{MaxInfoSAC}}}\xspace}
\newcommand{\algdrq}{\textcolor{black}{\mbox{\textsc{MaxInfoDrQ}}}\xspace}
\newcommand{\algdrqddpg}{\textcolor{black}{\mbox{\textsc{MaxInfoDrQv2}}}\xspace}
\def\eqref#1{equation~\ref{#1}}
\def\1{\bm{1}}
\def\vtheta{{\bm{\theta}}}
\def\va{{\bm{a}}}
\def\vf{{\bm{f}}}
\def\vs{{\bm{s}}}
\def\vu{{\bm{u}}}
\def\vw{{\bm{w}}}
\def\vy{{\bm{y}}}
\def\vz{{\bm{z}}}
\def\vpi{{\bm{\pi}}}
\def\vsigma{{\bm{\sigma}}}
\def\vtau{{\bm{\tau}}}
\DeclareMathAlphabet{\mathsfit}{\encodingdefault}{\sfdefault}{m}{sl}
\SetMathAlphabet{\mathsfit}{bold}{\encodingdefault}{\sfdefault}{bx}{n}
\def\gA{{\mathcal{A}}}
\def\gD{{\mathcal{D}}}
\def\gH{{\mathcal{H}}}
\def\gO{{\mathcal{O}}}
\def\gS{{\mathcal{S}}}
\def\gT{{\mathcal{T}}}
\def\gX{{\mathcal{X}}}
\newcommand{\E}{\mathbb{E}}
\newcommand{\R}{\mathbb{R}}
\newcommand{\KL}{D_{\mathrm{KL}}}
\NewDocumentCommand{\norm}{sm}{\IfBooleanTF{#1}{\|#2\|}{\left\| #2 \right\|}}
\DeclareMathOperator*{\argmax}{arg\,max}
\DeclareMathOperator*{\argmin}{arg\,min}
\theoremstyle{plain}
\newtheorem{theorem}{Theorem}[section]
\newtheorem{lemma}[theorem]{Lemma}
\theoremstyle{definition}
\theoremstyle{remark}
\Crefname{assumption}{Assumption}{Assumptions} 
\crefname{assumption}{assumption}{assumptions} 
\Crefname{lemma}{Lemma}{Lemma} 
\crefname{lemma}{lemma}{lemma} 
\title{\alg: Boosting exploration in \\reinforcement learning through \\information gain maximization}
\author{
Bhavya Sukhija$^{*, 1}$, Stelian Coros$^1$ 
Andreas Krause$^1$, Pieter Abbeel$^2$, Carmelo Sferrazza$^2$ \\
ETH Zürich $^1$, UC Berkeley$^2$ \\
\texttt{\{sukhijab, scoros, krausea\}@ethz.ch}\\
\texttt{\{pabbeel, csferrazza\}@berkeley.edu}
}
\begin{document}

\maketitle
{\def\thefootnote{}\footnotetext{Open-source implementations: \url{https://sukhijab.github.io/projects/maxinforl/}}}

\begin{abstract}
Reinforcement learning (RL) algorithms aim to balance exploiting the current best strategy with exploring new options that could lead to higher rewards. Most common RL algorithms use undirected exploration, i.e., select random sequences of actions.
Exploration can also be directed using intrinsic rewards, such as curiosity or model epistemic uncertainty. However, effectively balancing task and intrinsic rewards is challenging and often task-dependent. In this work, we introduce a framework, \alg, for balancing intrinsic and extrinsic exploration. \alg steers exploration towards informative transitions, by maximizing intrinsic rewards such as the information gain about the underlying task. 
When combined with Boltzmann exploration, this approach naturally trades off maximization of the value function with that of the entropy over states, rewards, and actions. We show that our approach achieves sublinear regret in the simplified setting of multi-armed bandits. We then apply this general formulation to a variety of off-policy model-free RL methods for continuous state-action spaces, yielding novel algorithms that achieve superior performance across hard exploration problems and complex scenarios such as visual control tasks.
\end{abstract}
\vspace{-3mm}

\begin{figure}[h]
    \centering
    \subcaptionbox{Normalized average performance of \alg across several deep RL benchmarks on state-based tasks.}[0.45\textwidth]{\includegraphics[width=0.45\textwidth]{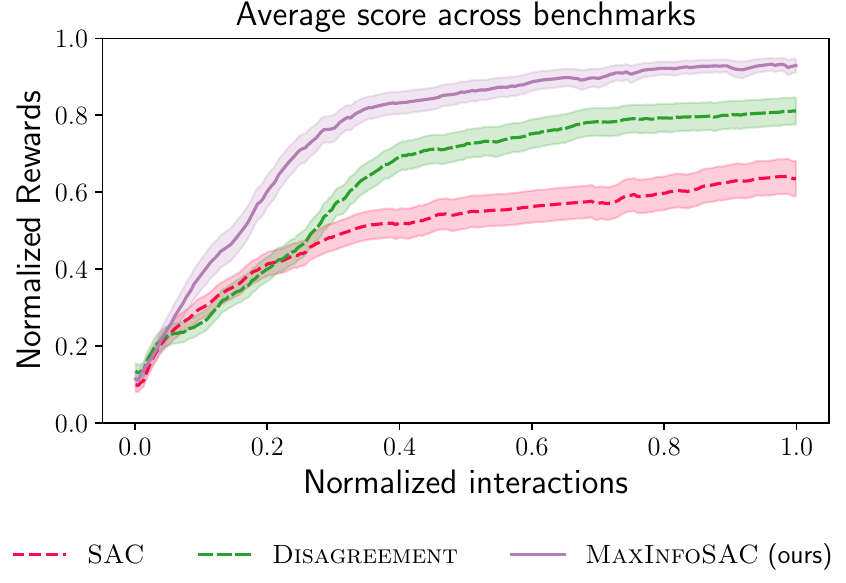}\label{fig:subfig1}}
    \hspace{4mm}
    \subcaptionbox{Normalized average performance of \algdrqddpg on the humanoid visual control tasks (stand, walk, and run).}[0.45\textwidth]{\includegraphics[width=0.45\textwidth]{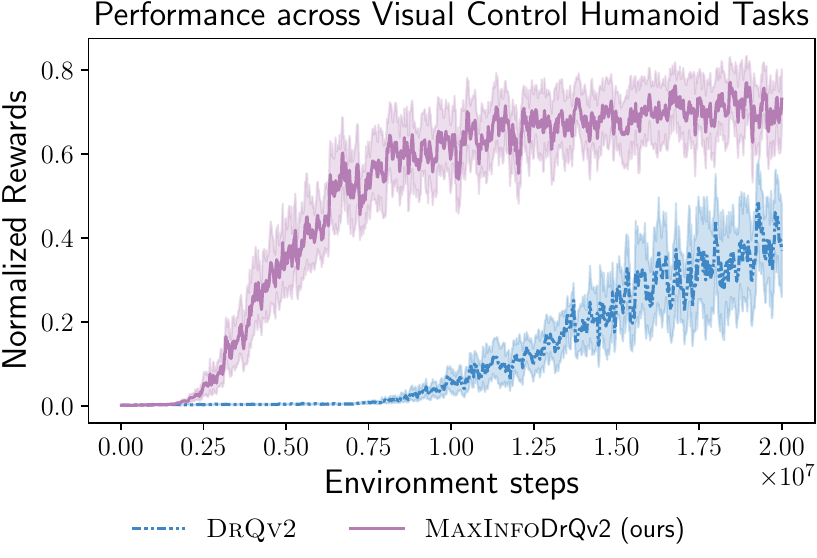}\label{fig:visual_control_summary}}
    \caption{We summarize the normalized performance of different variants of \alg; \algsac for state-based control and \algdrqddpg for visual control (cf.,~\cref{sec: experiments} for more details). We report the mean performance across five seeds with one standard error.}
    \label{fig:summary plot}
    \vspace{-3mm}
    \end{figure}
\section{Introduction} \label{sec:introduction}
\looseness=-1
Reinforcement learning (RL) has found numerous applications in sequential decision-making problems, from games~\citep{silver2017mastering}, robotics~\citep{hwangbo2019learning, brohan2023rt1roboticstransformerrealworld}, to fine-tuning of large language models~\citep{ouyang2022training}. However, most widely applied RL algorithms such as PPO~\citep{schulman2017proximal} are inherently sample-inefficient, requiring hundreds of hours of environment interactions for learning. Off-policy methods like SAC~\citep{haarnoja2018soft}, REDQ~\citep{chen2021randomized}, and DroQ ~\citep{hiraoka2021dropout} offer a more sample and compute efficient alternative and have demonstrated success in real-world learning~\citep{smith2022walk}. Despite this, they often require dense reward signals and suffer in the presence of sparse rewards or local optima. This is primarily due to their use of naive exploration schemes such as $\epsilon$-greedy or Boltzmann exploration and effectively take random sequences of actions for exploration.  These strategies are known to be suboptimal even for basic tasks~\citep{cesa2017boltzmann, burda2018exploration, sukhija2024optimistic}, yet remain popular due to their simplicity and scalability.

\looseness=-1
Several works~\citep{burda2018exploration, 
pathak2017curiosity,pathak2019self,sekar2020planning,sukhija2024optimistic}
 use intrinsic reward signals, e.g., curiosity or information gain, to improve the exploration of RL agents. Moreover, information gain~\citep{lidley_info_gain} is also widely applied in Bayesian experiment design~\citep{chaloner1995bayesian} and is the basis of many active learning methods~\citep{krause2008near, balcan2010true, hanneke2014theory, hübotter2024informationbased}. In RL, exploration via maximizing information gain offers strong theoretical guarantees~\citep{mania2020active, sukhija2024optimistic} and achieves state-of-the-art empirical performance~\citep{sekar2020planning, mendonca2021discovering}. However, a significant gap persists in both the theoretical and practical understanding of how to effectively balance intrinsic exploration objectives with naive extrinsic exploration algorithms. The goal of this work is to bridge this gap. To this end, we revisit the traditional, widely-used Boltzmann exploration, and enhance it by incorporating exploration bonuses derived from intrinsic rewards, like information gain. Our approach is grounded in both theoretical insights and practical motivations, and we empirically validate it across several deep RL benchmarks.

The key contributions of this work are summarized as follows:
\paragraph{Contributions}
\begin{enumerate}[leftmargin=0.5cm]
 \item We propose \alg, 
 a novel class of off-policy model-free algorithms for continuous state-action spaces that augments existing RL methods with directed exploration. In essence, \alg builds on standard Boltzmann exploration and guides it via an intrinsic reward. We propose a practical auto-tuning procedure that largely simplifies trading off extrinsic and intrinsic objectives. This yields algorithms that explore by visiting trajectories that achieve the maximum information gain about the underlying MDP, while efficiently solving the task. As a result, \alg retains the simplicity of traditional RL methods while adding directed exploration through intrinsic rewards. Additionally, we show how the same idea can be combined with other naive exploration techniques, such as $\epsilon$--greedy.
 \item In the simplified setting of stochastic multi-armed bandits in continuous spaces, we show that \mbox{\alg} has sublinear regret. In addition, we show that \alg benefits from all theoretical properties of contraction and convergence that hold for max-entropy RL algorithms such as SAC~\citep{haarnoja2018soft}.
 \item In our experiments, we 
 use an ensemble of dynamics models to estimate information gain and
 combine \alg with SAC~\citep{haarnoja2018soft}, REDQ~\citep{chen2021randomized}, DrQ~\citep{yarats2021image}, and DrQv2~\citep{yarats2021mastering}.
 We evaluate it on standard deep RL benchmarks for state and visual control tasks and show that \alg performs the best across all tasks and baselines, obtaining the highest performance also in challenging exploration problems (see.,~\cref{fig:summary plot} for the average performance of \alg across several environments). 
\end{enumerate}

\section{Background}\label{sec: background}
\looseness -1 A core challenge in RL is deciding whether the agent should leverage its current knowledge to maximize rewards or try new actions in pursuit of better solutions. Striking this balance between exploration--exploitation is critical. Here, we first introduce the problem setting, then we discuss two of the most commonly used exploration strategies in RL: $\epsilon$-greedy and Boltzmann exploration.

\subsection{Problem Setting} \label{sec: problem setting}
We study an infinite-horizon Markov decision process~\citep[MDP,][]{puterman2014markov}, defined by the tuple $(\gS, \gA, p, \gamma, r, \rho)$, where the state and action spaces are continuous, i.e., $\gS \subset \R^{d_s}, \gA \subset \R^{d_a}$, and the
unknown transition kernel $p: \gS \times \gS \times \gA \to [0, \infty) $ represents the probability density of the next state
$\vs_{t+1} \in \gS$ given the current state $\vs_{t} \in \gS$ and action $\va_{t} \in \gA$. At each step $t$ in the environment, the agent observes the state $\vs_t$, samples an action $\va_t$ from the policy $\vpi: \gA \times \gS \to [0, \infty)$, $\va_t \sim \vpi(\va|\vs_t)$, and 
receives a reward $r: \gS \times \gS \times \gA \to [-\frac{1}{2}r_{\max}, \frac{1}{2}r_{\max}]$. The agent's goal is to learn a policy $\vpi^*$ that maximizes the $\gamma$ discounted reward w.r.t.~the initial state distribution $\vs_0 \sim \rho$. 
 \begin{equation}
    \label{eq:rl-objective}
    \vpi^* = \argmax_{\vpi \in \Pi} J(\vpi) = \argmax_{\vpi \in \Pi} \E_{\vs_0, \va_0, \dots} \left[\sum_{t = 0}^{\infty} \gamma^t r_t \right].
\end{equation}

\looseness=-1
In the following, we provide the definitions of the state-action value function $Q^{\vpi}$ and the value function $V^{\pi}$:
\begin{equation*}
    Q^{\vpi}(\vs_t, \va_t) = \E_{\vs_{t+1}, \va_{t+1} \sim \vpi, \dots}  \left[\sum_{l = 0}^{\infty} \gamma^l r_{t+l} \right], \;
    V^{\vpi}(\vs_t) = \E_{\va_t \sim \vpi, \vs_{t+1}, \va_{t+1} \sim \vpi, \dots}  \left[\sum_{l = 0}^{\infty} \gamma^l r_{t+l} \right].
    \label{eq: vf_defs}
\end{equation*}
\vspace{-3mm}
\subsection{$\epsilon$--greedy and exploration } \label{sec: bg_eps_greedy}
The $\epsilon$-greedy strategy~\citep{kearns2002near, mnih2013playing, van2016deep} is widely applied in RL to balance exploration and exploitation, where the RL agent follows this simple decision rule below to select actions
\begin{equation}
    \va_t = \begin{cases}
   \va \sim \text{Unif}(\gA) \; & \text{with probability }\epsilon_t \\
    \underset{\va \in \gA}{\arg\max} \; Q^{*}(\vs_t, \va) \; & \text{else},
    \end{cases}
    \label{eq: eps_greedy_orig}
\end{equation}
Here $Q^{*}$ is the estimate of the optimal state-action value function.
Therefore, at each step $t$,  with probability $\epsilon_t$, a random action $\va_t \sim \text{Unif}(\gA)$ is sampled, else the
greedy action $\va_t = \max_{\va \in \gA} Q^{*}(\vs_t, \va)$ is picked.  \citet{lillicrap2015continuous, fujimoto2018addressing} extend this strategy to continuous state-action spaces, where a deterministic policy $\vpi_{\theta}$ is learned to maximize the value function and combined with random Gaussian noise for exploration. 

\subsection{Boltzmann Exploration}
Boltzmann exploration is the basis of many RL algorithms~\citep{sutton2018reinforcement, szepesvari2022algorithms}. The policy distribution $\vpi$ for Boltzmann is represented through
\begin{equation}
    \vpi(\va|\vs) \propto \exp\left(\alpha^{-1}Q^{\vpi}(\vs, \va)\right),
   \label{eq: bolztman standard}
\end{equation}
where $\alpha$ is the temperature parameter that regulates exploration and $Q^{\vpi}$ is the soft-$Q$ function. We neglect the normalization term $Z^{-1}(\vs)$ in the definition for simplicity. As $\alpha \to 0$, the policy greedily maximizes  $Q^{\vpi}(\vs, \va)$, i.e. it exploits, and as $\alpha \to \infty$ the policy adds equal mass to all actions in $\gA$, effectively performing uniform exploration. Intuitively, Boltzmann exploration can be interpreted as a smoother alternative to $\epsilon$--greedy, with $\alpha$ serving a similar role to $\epsilon$ in controlling the degree of exploration.
\cite{cesa2017boltzmann} show that the standard Boltzmann exploration is suboptimal even in the simplest settings. They highlight that a key shortcoming of Boltzmann exploration is that it does not reason about the uncertainty of its estimates.

Overall, both $\epsilon$--greedy and Boltzmann exploration 
strategies are undirected. They fail to account for the agent's ``lack of knowledge'' and do not encourage risk- or knowledge-seeking behavior. The agent explores by sampling random action sequences, which leads to suboptimal performance, particularly in challenging exploration tasks with continuous state-action spaces.

\subsection{Intrinsic exploration with information gain}
\looseness=-1
Intrinsic rewards or motivation are used to direct agents toward underexplored regions of the MDP. Hence they enable RL agents to acquire information in a more principled manner as opposed to the aforementioned naive exploration methods. 
 Effectively, the agent explores by selecting policies that maximize the $\gamma$-discounted intrinsic rewards.   
A common choice for the intrinsic reward is the information gain~\citep{elementsofIT, sekar2020planning, mendonca2021discovering, sukhija2024optimistic}. 
Accordingly, for the remainder of the paper, we center our derivations around using information gain as the intrinsic reward. However, our approach is flexible and can also be combined with other intrinsic exploration objectives, such as RND (\cite{burda2018exploration}, see~\cref{sec: additional experiments}).

 We study a non-linear dynamical system of the form
\begin{equation}
    \tilde{\vs}_{t+1} = \vf^*(\vs_t, \va_t) + \vw_t.
\end{equation}
Here $\tilde{\vs}_{t+1} = [\vs^{\top}_{t+1}, r_t]^{\top}$ represents the next state and reward, $\vf^*$ represents the \emph{unknown dynamics} and reward function of the MDP and $\vw_t$ is the process noise, which we assume to be zero-mean i.i.d.,$~\sigma^2$-Gaussian. Note this is a very common representation of nonlinear systems with continuous state-action spaces~\citep{khalil2015nonlinear} and the basis of many RL algorithms~\citep{pathak2019self, kakade2020information, curi2020efficient, mania2020active, wagenmaker2023optimal, sukhija2024neorl}. Furthermore, it models all essential and unknown components of the underlying MDP; the transition kernel and the reward function. 

\paragraph{Approximating information gain} Given a dataset of transitions $\gD_n = \{(\vs_i, \va_i, \tilde{\vs}'_i)\}^n_{i=0}$, e.g., a replay buffer, we learn a Bayesian model of the unknown function $\vf^*$, to obtain a posterior distribution $p(\vf^*|\gD_n)$ for $\vf^*$. This distribution can be Gaussian, e.g., Gaussian process models~\citep{rasmussen2005gp} or represented through Bayesian neural networks like probabilistic ensembles~\citep{lakshminarayanan2017simple}. As opposed to the typical model-based RL setting, similar to~\citet{burda2018exploration, pathak2017curiosity, pathak2019self}, our learned model is only used to determine the intrinsic reward. 
The information gain $I(\tilde{\vs}'; \vf^* | \vs, \va, \gD_n)$, reflects the uncertainty about the unknown dynamics $\vf^*$ from observing the transition $(\vs, \va, \tilde{\vs}')$.
Moreover, 
let $\vsigma(\vs, \va|\gD_n) = [\sigma_{j} (\vs, \va)]_{j\leq d_{\vs} +1}$ denote the model epistemic uncertainty or disagreement of $\vf^*$. 
\citet[Lemma 1.]{sukhija2024optimistic} show that
\begin{equation}
I(\tilde{\vs}'; \vf^* | \vs, \va, \gD_n) =  H(\tilde{\vs}' | \vs, \va, \gD_n) - H(\tilde{\vs} | \vs, \va, \vf^*, \gD_n) \leq \underbrace{\sum^{d_{\vs}+1}_{j=1} \log \left(1 + \frac{\sigma_{n-1, j}^2(\vs_t, \va_t)}{\sigma^2}\right)}_{I_u(\vs, \va)} \label{eq: entropy bound}
\end{equation}
where $H$ denotes the 
(differential) entropy~\citep{elementsofIT} and
in \cref{eq: entropy bound} the equality holds when $p(\vf^*|\gD_n)$ is Gaussian. 
Note that while the above is an upper bound, \cite{sukhija2024optimistic} motivate this choice from a theoretical perspective proving convergence of the active learning algorithm for the model-based setting. In this work, similar to \citet{pathak2019self, sekar2020planning, mendonca2021discovering, sukhija2024optimistic}, we use the upper bound of the information gain for our practical algorithm. The upper bound has a natural interpretation, since by picking actions $\va_t$ that maximize it, we effectively visit areas where we have high uncertainty about the unknown function $\vf^*$, therefore performing exploration in both state and action space. Empirically, this approach has shown to perform well, e.g.,~\citet{sekar2020planning, mendonca2021discovering}.

\paragraph{Data dependence of intrinsic rewards} Information gain and other intrinsic rewards depend on the data $\gD_n$, making them inherently nonstationary and non-Markovian. Intuitively, underexplored areas of the MDP become less informative once visited (c.f.,~\citet{prajapat2024submodular} for more details). However, in RL, intrinsic rewards are often treated similarly to extrinsic rewards, a simplification that works very well in practice~\citep{burda2018exploration, sekar2020planning}. We take a similar approach in this paper and omit the dependence of $I$ on $\gD_n$ and use $I(\tilde{\vs}'; \vf^* | \vs, \va)$ from hereon for simplicity. 

\section{\alg} \label{sec: method}
In this section, we present our method for combining intrinsic exploration with classical exploration strategies. While \alg builds directly on Boltzmann exploration, we begin by illustrating its key ideas in the context of an $\epsilon$--greedy strategy, due to its mathematical simplicity and natural distinction between exploration and exploitation steps. 
The insights gained from this serve as motivation for developing our main method: \alg with Boltzmann exploration algorithms, which we evaluate in \Cref{sec: experiments}.

\subsection{Modifying $\epsilon$--greedy for directed exploration} \label{sec: eps max info}
We modify the $\epsilon$--greedy strategy from \cref{sec: bg_eps_greedy} and learn two critics, $Q_{\text{extrinsic}}^{*}$ and $Q_{\text{intrinsic}}^{*}$, where $Q_{\text{extrinsic}}^{*}$ is the state-action value function of the extrinsic reward $r$ and $Q_{\text{intrinsic}}^{*}$ the critic of an intrinsic reward function $r_{\text{intrinsic}}$, for instance, the information gain (see~\cref{eq: entropy bound}). 
Unlike traditional 
$\epsilon$--greedy exploration, we leverage intrinsic rewards to guide exploration more effectively by selecting actions that maximize $Q_{\text{intrinsic}}^{*}$,
leading to more informed exploration rather than random sampling.
At each step $t$, we pick a greedy action that maximizes $Q_{\text{extrinsic}}^{*}$ with probability $1-\epsilon_t$, while for exploration, the action that maximizes the intrinsic critic is selected, i.e., $\va_t = \max_{\va \in \gA} Q_{\text{intrinsic}}^{*}(\vs_t, \va)$. 
\begin{equation}
    \va_t = \begin{cases}
    \underset{\va \in \gA}{\arg\max} \; Q_{\text{intrinsic}}^{*}(\vs_t, \va) \; & \text{with probability } \epsilon_t \\
    \underset{\va \in \gA}{\arg\max} \; Q_{\text{extrinsic}}^{*}(\vs_t, \va) \; & \text{else},
    \end{cases}
    \label{eq: eps_greedy}
\end{equation}

\looseness=-1
We call the resulting exploration strategy $\epsilon$--\alg.
This approach is motivated by the insight that in continuous spaces, intrinsic rewards cover the state-action spaces much more efficiently than undirected random exploration, making them more effective for exploration in general~\citep{aubret2019survey, sekar2020planning, sukhija2024optimistic}. 
In \Cref{sec: eps_greedy_bandits}, to give a theoretical intuition of our approach, we study $\epsilon$--\alg in the simplified setting of multi-armed bandit (MAB). We show that as more episodes are played, it gets closer to the optimal solution, i.e. has sublinear-regret. %

The key takeaway from $\epsilon$--\alg is that instead of exploring with actions that maximize entropy in the action spaces, e.g., uniform sampling, we select policies that also yield high information about the MDP during learning. In the following, we leverage this idea and modify the target distribution of Boltzmann exploration to incorporate intrinsic exploration bonuses. Moreover, 
$\epsilon$--\alg has two practical drawbacks; (\emph{i}) it requires training two actor-critics and (\emph{ii}) practically, the probability $\epsilon_t$ is specified by the problem designer. 
We address both these limitations in the section below and present our main method. 

\subsection{\alg with Boltzmann exploration}\label{subsec:boltzmanexplore}



In \cref{sec: eps max info}, we modify $\epsilon$--greedy to sample actions with high intrinsic rewards during exploration instead of randomly picking actions. Motivated from the same principle, 
we augment the distribution of Boltzmann exploration with the intrinsic reward $I(\tilde{\vs}'; \vf^* | \vs, \va)$ to get the following
\begin{equation}
    \vpi(\va|\vs) \propto \exp\left(\alpha^{-1}Q^{\vpi}(\vs, \va)  + I(\tilde{\vs}'; \vf^* | \vs, \va)\right).
    \label{eq: bolztman max info}
\end{equation}
 The resulting distribution encourages exploration w.r.t.~information gain, with $\alpha$ playing a similar role to $\epsilon$ in \cref{eq: eps_greedy}. 
 Therefore, \cref{eq: bolztman max info} can be viewed as a \textit{soft} formulation of \cref{eq: eps_greedy}.
 Effectively, instead of randomly sampling actions, for large values of the temperature, we 
pick actions that yield high information
while maintaining the exploitative behavior for smaller temperatures.
This distribution is closely related to the epistemic risk-seeking exponential utility function from $K$-learning~\citep{o2021variational} and probabilistic inference in RL~\citep{tarbouriech2024probabilistic}. 
As we show in the following, this choice of parameterization results in a very intuitive objective for the policy. Given the previous policy $\vpi^{\text{old}}$ and $Q^{\vpi^{\text{old}}}$, akin to~\citet{haarnoja2018soft}, we select the next policy $\vpi^{\text{new}}$ through the following optimization
\begin{align}
\vpi^{\text{new}} &= \argmin_{\vpi \in \Pi} \text{D}_{\text{KL}}\left(\vpi(\cdot|\vs) \Bigg\| Z^{-1}(\vs)\exp\left(\frac{1}{\alpha}Q^{\vpi^{\text{old}}}(\vs, \cdot) +  I(\tilde{\vs}'; \vf^* | \vs, \va)\right)\right) \notag \\
 &= \argmax_{\vpi \in \Pi}  \E_{\va \sim \vpi(\cdot|\vs)}\left[ Q^{\vpi^{\text{old}}}(\vs, \va)  - \alpha \log(\vpi(\va|\vs)) + \alpha I(\tilde{\vs}'; \vf^* | \vs, \va)\right] \notag \\
     &= \argmax_{\vpi \in \Pi}  \E_{\va \sim \vpi(\cdot|\vs)}\left[ Q^{\vpi^{\text{old}}}(\vs, \va)\right]  + \alpha H(\tilde{\vs}', \va | \vs),
      \label{eq: policy_update}
\end{align}
here in the last line we used that $\E_{\va \sim \vpi(\cdot|\vs)}[-\log(\vpi(\va|\vs)) + I(\tilde{\vs}'; \vf^* | \vs, \va)] = H(\va|\vs) + H(\tilde{\vs}'|\va, \vs) - H(\tilde{\vs}'|\vs, \va, \vf^*) = H(\tilde{\vs}', \va|\vs) - H(\vw)$.
Hence, the policy $\vpi^{\text{new}}$ trades off maximizing the value function with the \emph{entropy of the states, rewards, and actions}. This trade-off is regulated through the temperature parameter $\alpha$. We provide a different perspective to \cref{eq: policy_update} from the lens of control as inference~\citep{levine2018reinforcement, hafner2020action} in \cref{sec: kl minimization}. 
\paragraph{Separating exploration bonuses} 
\alg has two exploration bonuses; (\emph{i}) the policy entropy, and (\emph{ii}) the information gain (\cref{eq: entropy bound}).  The two terms are generally of different magnitude and tuning the temperature for the policy entropy is fairly well-studied in RL~\citep{haarnoja2018soft}. To this end, we modify \cref{eq: policy_update} 
and introduce two individual temperature parameters $\alpha_1$ and $\alpha_2$ to separate the bonuses. Furthermore, since information gain does not have a closed-form solution in general, akin to prior work~\citep{sekar2020planning, sukhija2024optimistic}, we use its upper bound $I_u(\vs, \va)$ (\cref{eq: entropy bound}) instead.
\begin{align}J^{\vpi^{\text{old}}}(\vpi|\vs)  &=
  \E_{\va \sim \vpi(\cdot|\vs)}\left[Q^{\vpi^{\text{old}}}(\vs, \va)
 -\alpha_1 \log(\vpi(\va|\vs)) + \alpha_2 I_u(\vs, \va)\right]
     \notag \\
\vpi^{\text{new}}(\cdot|\vs)
  &= \argmax_{\vpi \in \Pi} J^{\vpi^{\text{old}}}(\vpi|\vs) \label{eq:vf_soft_q_updated}
\end{align}
\looseness -1 
For $\alpha_1$, we can either use a deterministic policy with $\alpha_1 = 0$ like~\citet{lillicrap2015continuous} or auto-tune  $\alpha_1$ as suggested by \citet{haarnoja2018soft}. Notably, for $\alpha_2=0$ we get the standard max entropy RL methods~\citep{haarnoja2018soft}. Therefore, by introducing two separate temperatures, 
we can treat information gain as another exploration bonus in addition to the policy entropy and combine it with any RL algorithm.
\paragraph{Auto-tuning the temperature for the information gain bonus}

\citet{haarnoja2018soft} formulate the problem of soft-Q learning as a constrained optimization.
\begin{align*}
   \vpi^*(\cdot| \vs) &\coloneqq \underset{\vpi \in \Pi}{\arg\max}\; \E_{\va \sim \vpi}\left[ Q^{\vpi}(\vs, \va)\right] \; \text{s.t.,} \; H(\va| \vs) \geq \bar{H} \\
   &\coloneqq \underset{\vpi \in \Pi}{\arg\max}\; \min_{\alpha_1 \geq 0} \E_{\va \sim \vpi}\left[ Q^{\vpi}(\vs, \va) - \alpha_1 (\log(\vpi(\va|\vs)) + \bar{H})\right].
\end{align*}
The entropy coefficient is then auto-tuned by solving this optimization problem gradually via stochastic gradient descent (SGD).
In a similar spirit,
we propose the following constraints to auto-tune the temperatures for the entropy and the information gain
\begin{align*}
   \vpi^*(\cdot| \vs) &\coloneqq \underset{\vpi \in \Pi}{\arg\max}\; \E_{\va \sim \vpi}\left[ Q^{\vpi}(\vs, \va)\right] \; \text{s.t.,} \; H(\va| \vs) \geq \bar{H}, \E_{\va \sim \vpi} \left[I_u(\vs, \va)\right] \geq \bar{I}_u(\vs) \addtocounter{equation}{1}\tag{\theequation}    \label{eq: constraint}\\
   &\coloneqq \underset{\vpi \in \Pi}{\arg\max} \min_{\alpha_1, \alpha_2 \geq 0} \E\left[ Q^{\vpi}(\vs, \va) - \alpha_1 (\log(\vpi(\va|\vs)) + \bar{H}) + \alpha_2 (I_u(\vs, \va) - \bar{I}_u(\vs)) \right]
\end{align*}
\citet{haarnoja2018soft} use a simple heuristic $\bar{H} = - \text{dim}(\gA)$ for the target entropy. However, we cannot specify a general desired information gain since this depends on the learned Bayesian model $p(\vf^*)$. This makes choosing $\bar{I}_u$ task-dependent. For our experiments, we maintain a target policy $\bar{\vpi}$, updated similarly to a target critic in off-policy RL, and define $\bar{I}_u$ as
\begin{equation}
\bar{I}_u(\vs) \coloneqq \sum^{d_{\vs}+1}_{j=1} \E_{\va \sim \bar{\vpi}(\cdot|\vs)}\left[\log \left(1 + \sigma^{-2}\sigma_{n-1, j}^2(\vs, \va)\right)\right]
\label{eq: target info gain}
\end{equation}
Intuitively, the constraint enforces that the current policy $\vpi$ explores w.r.t.~the information gain, at least as much as the target policy $\bar{\vpi}$. 
In principle, any other constraint can be used to optimize for $\alpha_2$. We consider our constraint since (\emph{i}) it is easy to evaluate, (\emph{ii}) it can be combined with other intrinsic rewards\footnote{$I$ could represent a different intrinsic reward function, e.g., RND~\citet{burda2018exploration}.}, and (\emph{iii}) it is modular, i.e., it can be added to any RL algorithm. 
Moreover, as \alg can be combined with any base off-policy RL algorithm such as SAC~\citep{haarnoja2018soft} or DDPG~\citep{lillicrap2015continuous}, it benefits from the simplicity and scalability of these methods. In addition, it introduces the information gain as a directed exploration bonus and automatically tunes its temperature similar to the policy entropy in \citet{haarnoja2018soft}.
Therefore, it benefits from both the strengths of the naive extrinsic exploration methods and the directedness of intrinsic exploration.
We demonstrate this in our experiments, where we combine \alg with SAC~\citep{haarnoja2018soft}, REDQ~\citep{chen2021randomized}, DrQ~\citep{yarats2021image}, and DrQv2~\citep{yarats2021mastering}. 

\paragraph{Convergence of \alg} \label{sec: theory boltzmann}
In the following, we study our modified Boltzmann exploration strategy and show that
as in~\citet{haarnoja2018soft}, the update rules for $Q$ function and the policy converge to an optimal policy $\vpi^* \in \Pi$. We make a very general assumption that the entropy of the policy and the model epistemic uncertainty are all bounded for all $(\vs, \va) \in \gS \times \gA$. The proof of the theorem and the related lemmas are provided in \cref{sec: boltzmann proofs}.

We define the Bellman operator $\gT^{\vpi}$
\begin{equation}
    \gT^{\vpi} Q(\vs, \va) = r(\vs, \va) + \gamma \E_{\vs'|\vs, \va}[ V^{\vpi}(\vs')],
    \label{eq:bellman_operator}
\end{equation}
where 
\begin{equation}
    V^{\vpi}(\vs) = \E_{\va \sim \vpi(\cdot|\vs)}[Q(\vs, \va) -\alpha_1 \log(\vpi(\va|\vs)) + \alpha_2 I_u(\vs, \va)]
     \label{eq:vf_soft_q}
\end{equation}
is the soft-value function.

\begin{theorem}[\alg soft Q learning]
     Assume that the reward, the entropy for all $\vpi \in \Pi$, and the model epistemic uncertainty $\vsigma_n$ are all bounded for all $n\geq 0$, $(\vs, \va) \in \gS \times \gA$. 
    The repeated application of soft policy evaluation (\cref{eq:bellman_operator}) and soft policy update (\cref{eq:vf_soft_q_updated}) to any $\vpi \in \Pi$ converges to $\vpi^* \in \Pi$ such that $Q^{\vpi}(\vs, \va)  \leq Q^{\vpi^{*}}(\vs, \va)$ for all $\vpi \in \Pi$, $(\vs, \va) \in \gS \times \gA$.
    \label{thm: boltzman convergence}
\end{theorem}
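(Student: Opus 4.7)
The plan is to mirror the policy iteration argument of Haarnoja et al.\ (2018) for soft actor-critic, treating the information gain bonus $\alpha_2 I_u(\vs, \va)$ simply as an additive component of an \emph{effective reward} that remains bounded by our hypotheses. The proof will proceed in three standard stages: (i) soft policy evaluation, (ii) soft policy improvement, and (iii) soft policy iteration, with the slight twist that our ``augmented reward'' carries both an entropy regularizer and an epistemic-uncertainty bonus.

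For the soft policy evaluation step, I would first rewrite the Bellman operator $\gT^{\vpi}$ acting on $Q$ as
\begin{equation*}
    \gT^{\vpi} Q(\vs, \va) = \tilde{r}^{\vpi}(\vs,\va) + \gamma \E_{\vs'|\vs,\va}\E_{\va'\sim\vpi}[Q(\vs',\va')],
\end{equation*}
where $\tilde{r}^{\vpi}(\vs,\va) = r(\vs,\va) + \gamma\,\E_{\vs'|\vs,\va}\E_{\va'\sim\vpi(\cdot|\vs')}[-\alpha_1 \log \vpi(\va'|\vs') + \alpha_2 I_u(\vs',\va')]$. By the bounded-reward, bounded-entropy, and bounded-uncertainty assumptions, $\tilde{r}^{\vpi}$ is bounded uniformly in $(\vs,\va)$. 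Then the standard $\gamma$-contraction argument in the $\|\cdot\|_\infty$ norm gives $\|\gT^{\vpi} Q_1 - \gT^{\vpi} Q_2\|_\infty \le \gamma \|Q_1-Q_2\|_\infty$, so by Banach's fixed point theorem iterates $Q^{k+1} = \gT^{\vpi} Q^k$ converge to the unique soft $Q^{\vpi}$ satisfying $Q^{\vpi} = \gT^{\vpi} Q^{\vpi}$.

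For the soft policy improvement step, let $\vpi^{\text{new}}$ be the maximizer of $J^{\vpi^{\text{old}}}(\vpi|\vs)$ defined in \cref{eq:vf_soft_q_updated}. By optimality, $J^{\vpi^{\text{old}}}(\vpi^{\text{new}}|\vs) \ge J^{\vpi^{\text{old}}}(\vpi^{\text{old}}|\vs) = V^{\vpi^{\text{old}}}(\vs)$ for every $\vs\in\gS$. Plugging this inequality into the soft Bellman recursion for $Q^{\vpi^{\text{old}}}$ and unrolling it, I would obtain
\begin{equation*}
Q^{\vpi^{\text{old}}}(\vs,\va) \le r(\vs,\va) + \gamma\,\E_{\vs'}\left[J^{\vpi^{\text{old}}}(\vpi^{\text{new}}|\vs')\right] \le \cdots \le Q^{\vpi^{\text{new}}}(\vs,\va),
\end{equation*}
via the usual telescoping argument (identical in structure to Haarnoja et al., Lemma 2), where the successive upper bounding replaces $\vpi^{\text{old}}$ by $\vpi^{\text{new}}$ at each time step and the boundedness assumptions ensure the tail $\gamma^k$-term vanishes.

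Finally, soft policy iteration combines both steps: starting from any $\vpi^0\in\Pi$, the sequence $\{Q^{\vpi^k}\}$ is pointwise monotonically nondecreasing and, by boundedness of $\tilde{r}^{\vpi}$ and geometric discounting, uniformly bounded above. Hence it converges to some $Q^{\infty}$ with corresponding policy $\vpi^*\in\Pi$, and the policy improvement step at the limit must be tight, forcing $\vpi^*$ to satisfy the soft Bellman optimality condition. Standard arguments then yield $Q^{\vpi}(\vs,\va)\le Q^{\vpi^*}(\vs,\va)$ for all $\vpi\in\Pi$ and $(\vs,\va)\in\gS\times\gA$. The main conceptual obstacle, and the place where I would be most careful, is the treatment of the information-gain term: because $I_u$ depends implicitly on $\gD_n$ (cf.\ the discussion of data dependence preceding \cref{sec: method}), the Bellman operator is in reality nonstationary; the theorem sidesteps this by treating $I_u$ as a fixed state-action bonus, and the boundedness hypothesis on $\vsigma_n$ is exactly what is needed so that the effective reward is bounded and the contraction argument applies at each fixed dataset.
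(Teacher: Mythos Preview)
Your proposal is correct and follows essentially the same route as the paper: both absorb the entropy and information-gain bonuses into an augmented per-step reward $\tilde r^{\vpi}$, invoke boundedness to reduce soft policy evaluation to standard (contraction-based) policy evaluation, establish monotone policy improvement by the $J^{\vpi^{\text{old}}}(\vpi^{\text{new}}|\vs)\ge V^{\vpi^{\text{old}}}(\vs)$ inequality plus Bellman unrolling, and then conclude via bounded monotone convergence exactly as in Haarnoja et al.\ (2018). Your closing remark on the nonstationarity of $I_u$ is a useful clarification that the paper leaves implicit.
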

\Cref{thm: boltzman convergence} shows that our reformulated expression for Boltzmann exploration exhibits the same convergence properties from~\citet{haarnoja2018soft}. 

\section{Experiments} \label{sec: experiments}
\looseness=-1
We evaluate \alg with Boltzmann exploration from \cref{subsec:boltzmanexplore} across several deep RL benchmarks~\citep{brockman2016openai, tassa2018deepmind, Sferrazza-RSS-24} on state-based and visual control tasks. 
In all our experiments, we report the mean performance with standard error evaluated over five seeds. 
For the state-based tasks we combine \alg with SAC~\citep{haarnoja2018soft} and for the visual control tasks with DrQ~\citep{yarats2021image} and DrQv2~\citep{yarats2021mastering}. 
In the following, we refer to these algorithms as \algsac, \algdrq, and \algdrqddpg, respectively. To further demonstrate the generality of \alg, in \Cref{sec: additional experiments}, we provide additional experiments, where we combine \alg with REDQ~\citep{chen2021randomized}, OAC~\citep{ciosek2019better}, DrM~\citep{xu2023drm}, use RND~\citep{burda2018exploration} as an intrinsic reward instead of the information gain, and also evaluate the $\epsilon$--\alg from \cref{sec: eps max info} with curiosity~\citep{pathak2017curiosity} and disagreement~\citep{pathak2019self, sekar2020planning, mendonca2021discovering} as intrinsic rewards. 

\looseness=-1
\textbf{Baselines}: For the state-based tasks, in addition to SAC, we consider four baselines, all of which use SAC as the base RL algorithm:
\begin{enumerate}[leftmargin=0.5cm]
    \item \textsc{Disagreement}: Employs an explore then exploit strategy, where it maximizes only the intrinsic reward for the first $25\%$ of environment interaction and then switches to the exploitation phase where the extrinsic reward is maximized. 
    We use disagreement in the forward dynamics model for the intrinsic reward. 
    \item \textsc{Curiosity}: The same as \textsc{Disagreement} but with curiosity as the intrinsic reward. 
    \item \textsc{SACIntrinsic}: Based on~\citet{burda2018exploration}, where a normalized intrinsic reward is added to the extrinsic reward. 
    \item \textsc{SACEipo}: Uses a weighted sum of intrinsic and extrinsic rewards, where the weight is tuned with the extrinsic optimality constraint from~\citet{chen2022redeeming}. 
\end{enumerate}
For the visual control tasks, we use DrQ and DrQv2 as our baselines. More details on our baselines and experiment details are provided in \cref{sec: Experiment details}.

\textbf{Does \algsac achieve better performance on state-based control problems?} 
\looseness=-1
\looseness -1 In \Cref{fig:maxinforl_comparison} we compare \algsac with the baselines across several tasks of varying dimensionality\footnote{including the humanoid from DMC: $d_s = 67$, $d_a = 21$} from the DeepMind control~\citep[DMC,][]{tassa2018deepmind} and OpenAI gym benchmark suite~\citep{brockman2016openai}. Across all tasks we observe that \algsac consistently performs the best. While the other baselines perform on par with \algsac on some tasks, they fail to solve others. On the contrary, \algsac consistently achieves the highest performance in all environments (cf.,~\cref{fig:summary plot}). To further demonstrate the scalability of \algsac, we evaluate it on a practical robotics benchmark, namely 
HumanoidBench~\citep{Sferrazza-RSS-24} that features a simulated Unitree H1 robot on a variety of tasks. We use the \textit{no-hand} version of the benchmark, and evaluate our algorithm on the stand, walk, and run tasks. We compare \algsac to SAC in \cref{fig:maxinforl_humanoind_bench} and again observe that \algsac achieves overall higher performance, except for a minor convergence delay on the stand task, which is a trivial exploration problem concerning pure stabilization. 
\begin{figure}[th]
    \centering
    \includegraphics[width=\linewidth]{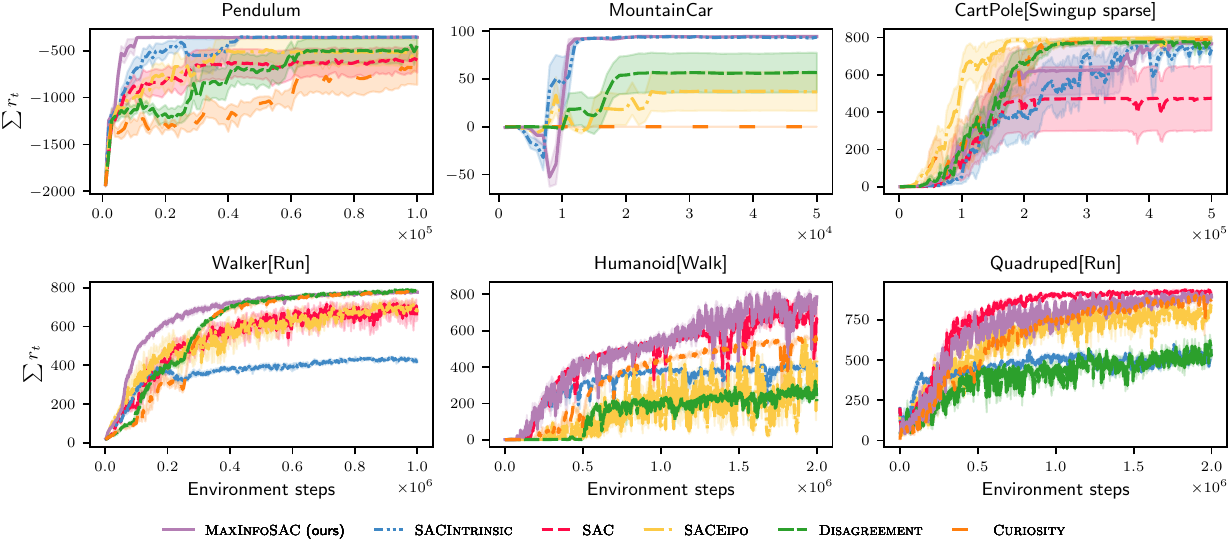}
    \caption{Learning curves of all methods on several environments from the OpenAI gym and DMC suite benchmarks.}
    \label{fig:maxinforl_comparison}
    \vspace{-3mm}
\end{figure}

\begin{figure}[th]
    \vspace{-3mm}
    \centering
    \includegraphics[width=\linewidth]{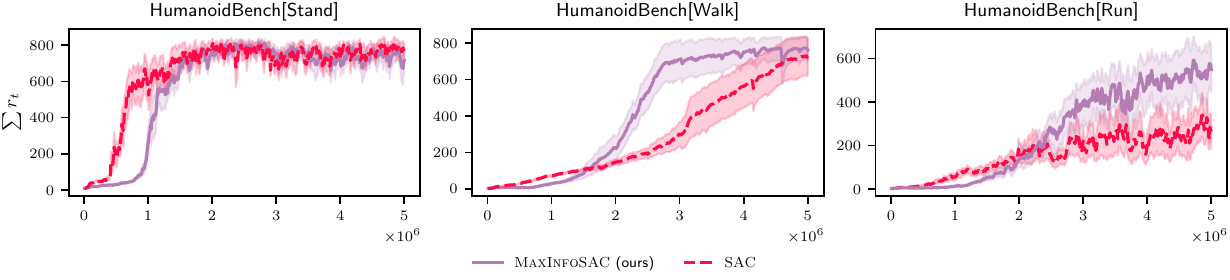}
    \vspace{-6mm}
    \caption{Performance of \algsac and SAC on the HumanoidBench benchmark.}
    \vspace{-5mm}
    \label{fig:maxinforl_humanoind_bench}
\end{figure}

\textbf{Does \algsac solve hard exploration problems?}
\looseness=-1
Naive exploration techniques often struggle with challeging exploration tasks~\citep{burda2018exploration, curi2020efficient, sukhija2024optimistic}. To test \algsac in this context, similar to \citet{curi2020efficient}, we modify the reward in Pendulum, CartPole, and Walker by adding an action cost, $r_{\text{action}}(\va) = -K \norm{\va}_2$, where $K$ controls the penalty for large actions. \citet{curi2020efficient} empirically show that even for small $K$ values, naive exploration methods fail, converging to the sub-optimal solution of applying no actions.

\looseness=-1
In \Cref{fig:maxinforl_action_cost} we compare \algsac with the baselines. We observe that SAC struggles with action costs, especially in CartPole and Walker. Both \textsc{SACEipo} and \textsc{SACIntrinsic} underperform, likely due to poor handling of extrinsic and intrinsic rewards. Specifically, \textsc{SACEipo} quickly reduces its intrinsic reward weight to zero (cf.,~\cref{fig:saceipo weights} in \cref{sec: additional experiments}), making it overly greedy. Disagreement and curiosity-based methods perform better since we manually tune their number of intrinsic exploration interactions. However, \algsac achieves the best performance by automatically balancing intrinsic and extrinsic rewards. For \algsac and SAC, we also depict the phase plot from the exploration on the pendulum environment in~\Cref{fig:pendulum phase plot}. \algsac covers the state space much faster than SAC, effectively solving the Pendulum swing-up task ($\text{Target} = (0, 0)$) within $10$K environment steps.

\begin{figure}[th]
    \vspace{-1mm}
    \centering
    \includegraphics[width=\linewidth]{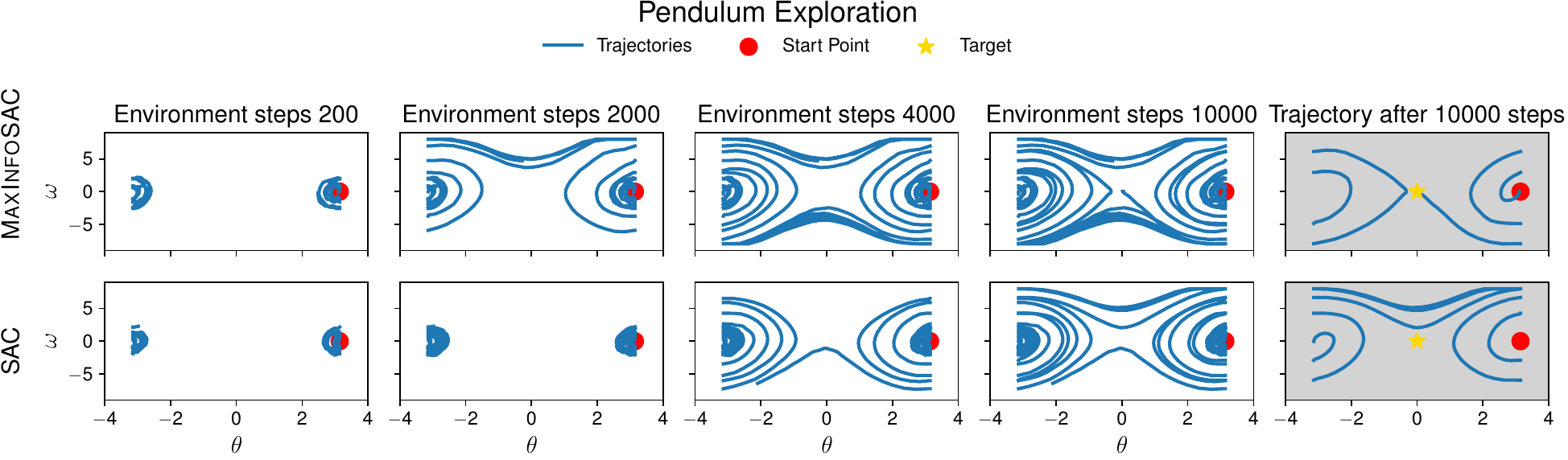}
    \vspace{-5mm}
    \caption{Phase plots during learning of \alg and SAC on the Pendulum environment. \algsac covers the state space much faster and effectively solves the swing-up task within $10$K environment interactions.}
    \label{fig:pendulum phase plot}
    \vspace{-2mm}
\end{figure}

\begin{figure}[th]
    \vspace{-2mm}
    \centering
    \includegraphics[width=\linewidth]{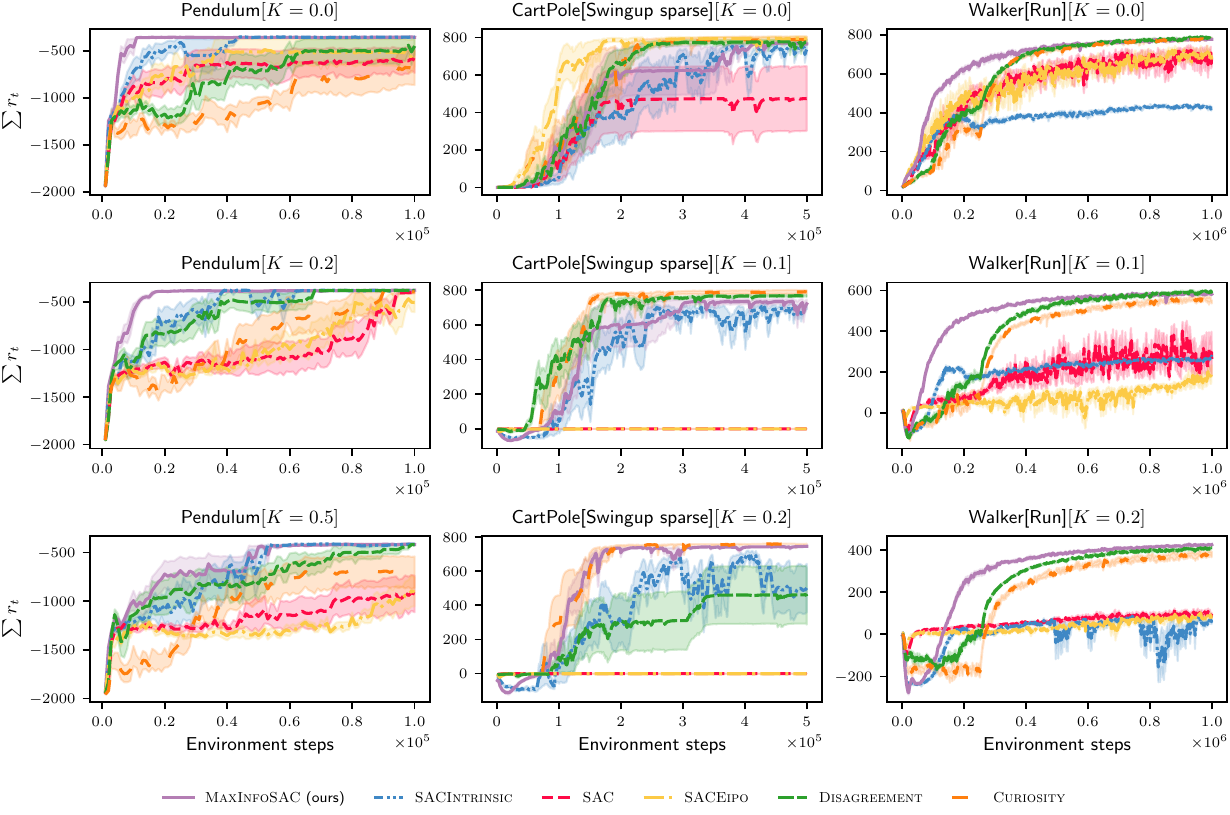}
    \caption{
    \vspace{-3mm}
    Learning curves for state-based tasks for different values of the action cost parameter $K$.}
    \label{fig:maxinforl_action_cost}
    \vspace{-0mm}
\end{figure}

\textbf{Does \alg scale to visual control tasks?}
\begin{figure}[th]
    \centering
    \includegraphics[width=\linewidth]{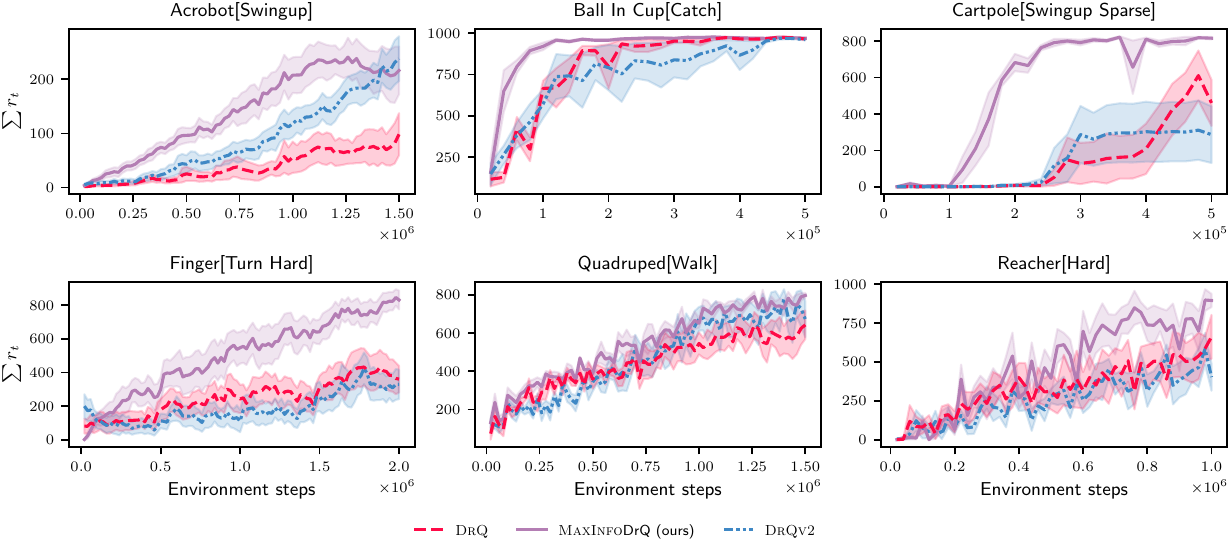}
    \vspace{-5mm}
    \caption{Learning curves from visual control tasks of the DMC suite.}
    \label{fig:maxinforl_drq}
    \vspace{-3mm}
\end{figure}
\looseness=-1
In this section, we combine \alg with DrQ and evaluate it on visual control problems from the DeepMind control suite~\citep[DMC,][]{tassa2018deepmind}. DrQ is a visual control algorithm based on the max entropy framework from SAC, therefore, it can easily be combined with \alg.
We call the resulting algorithm \algdrq. 
We compare \algdrq with DrQ and DrQv2 in \Cref{fig:maxinforl_drq}. From the figure, we conclude that \algdrq consistently reaches higher rewards and better sample efficiency than the baselines across all tasks. This illustrates the scalability and generality of \alg.

\textbf{Solving challenging visual control tasks with \alg}
For challenging visual control problems, in particular, the humanoid tasks from DMC, \citet{yarats2021mastering} propose DrQv2, a modified version of DrQ, which uses $n$--step returns and DDPG with noise scheduling instead of SAC for the base algorithm. They claim that switching to DDPG with noise scheduling is particularly useful for solving the humanoid tasks. To this end, we combine \alg with DrQv2 (\algdrqddpg) and evaluate it on the stand, walk, and run humanoid tasks from DMC. This demonstrates the flexibility of \alg, as it can seamlessly be combined with DrQv2, cf.,~\cref{sec: Experiment details} for more details and \cref{sec: additional experiments} for the performance of \algdrqddpg on other DMC tasks. We compare \algdrqddpg with DrQv2 in \Cref{fig:maxinforl_humanoid}. \algdrqddpg results in substantial gains in performance and sample efficiency compared to DrQv2. To our knowledge, those shown in \Cref{fig:maxinforl_humanoid} are the highest returns reached in these challenging visual control tasks by model-free RL algorithms in the literature. 
This further illustrates the advantage of directed exploration via information gain/intrinsic rewards. 

\begin{figure}[th]
    \vspace{-2mm}
    \centering
    \includegraphics[width=\linewidth]{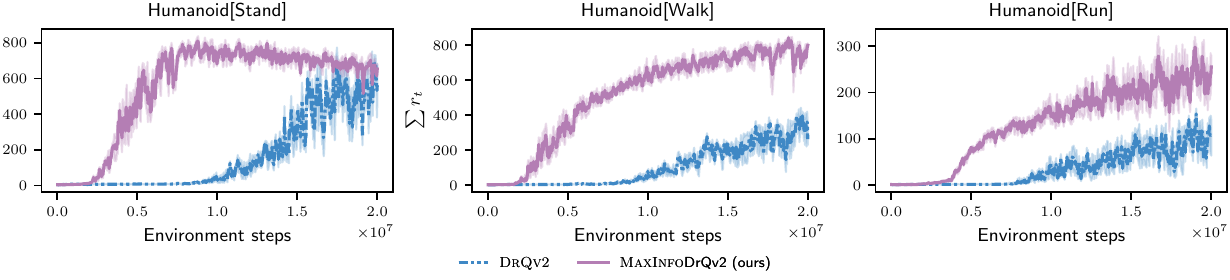}
    \caption{Learning curves from the visual control humanoid tasks of the DMC suite.}
    \label{fig:maxinforl_humanoid}
    \vspace{-3mm}
\end{figure}

\section{Related Works} \label{sec: related works}
\paragraph{Naive Exploration}
\looseness=-1
Naive exploration approaches such as $\epsilon$--greedy or Boltzmann are widely applied in RL due to their simplicity~\citep{mnih2013playing, schulman2017proximal,
lillicrap2015continuous,
haarnoja2018soft, hafner2023mastering}. 
In particular, the maximum entropy framework~\citep{ziebart2008maximum} is the basis of many sample-efficient model-free deep RL algorithms~\citep{haarnoja2018soft, chen2021randomized, hiraoka2021dropout, yarats2021image}.
However, these methods often perform suboptimally, especially in challenging exploration problems such as those with sparse rewards or local optima (cf.,~\cref{sec: experiments}). 
Effectively, the agent explores the underlying MDP by taking random sequences of actions. In continuous spaces, this makes sufficiently covering the state and action space exceptionally challenging.
Moreover, even in the simplest setting of MAB in continuous spaces, the most common and theoretically sound exploration strategies are Thompson sampling (TS), upper-confidence bound (UCB)~\citep{srinivas, chowdhury2017kernelized}, and information-directed sampling~\citep{russo2018learning, kirschner2021information}. There are several RL algorithms based on thes strategies~\citep{brafman2002r, jaksch10a, 
ouyang2017learning, nikolov2018information, ciosek2019better, kakade2020information, curi2020efficient, sukhija2024neorl}. Similarly, there are methods from Bayesian RL~\citep{osband2018randomized, fellows2021bayesian, buening2023minimax} that perform principled exploration. However, the naive exploration approaches remain ubiquitous in deep RL due to their simplicity.
Instead, intrinsic rewards are often used to facilitate more directed exploration. However, how to combine extrinsic and intrinsic exploration is much less understood both theoretically and practically. 

\paragraph{Intrinsic exploration}
Several works use intrinsic rewards as a surrogate objective to facilitate exploration in challenging tasks \citep[cf.,][for a comprehensive survey]{aubret2019survey}. 
Common choices of intrinsic rewards are model prediction error or ``Curiosity''~\citep{schmidhuber1991possibility, pathak2017curiosity, burda2018exploration}, novelty of transitions/state-visitation counts~\citep{stadie2015incentivizing,bellemare2016unifying}, diversity of skills/goals~\citep{eysenbach2018diversity, DADS, nair2018visual, skewfit}, empowerment~\citep{klyubin2005empowerment, salge2014empowerment}, state-entropy~\citep{mutti2021task, seo2021state, kim2024accelerating},
    and information gain over the forward dynamics~\citep{sekar2020planning, mendonca2021discovering, sukhija2024optimistic}.  In this work, we focus on the information gain since it is the basis of many theoretically sound and empirically strong active learning methods~\citep{krause2008near, balcan2010true, hanneke2014theory, mania2020active, sekar2020planning, sukhija2024optimistic, hübotter2024informationbased}. Furthermore, we also motivate the choice of information gain/model epistemic uncertainty theoretically (cf.,~\cref{thm: epsilon_greedy_thm} and \cref{sec: kl minimization}). Nonetheless, \alg can also be used with other intrinsic exploration objectives. In this work, we study how to combine the intrinsic rewards with the extrinsic ones for exploration in RL. Moreover, approaches such as~\cite{pathak2017curiosity, pathak2019self, sukhija2024optimistic, sekar2020planning} execute an explore then exploit strategy, where initially the extrinsic reward is ignored and a policy is trained to maximize the intrinsic objective. After this initial exploration phase, the agent is trained to maximize the extrinsic reward. This strategy is common in active learning methods and does not result in sublinear cumulative regret even for the simple MAB setting. This is because the agent executes a finite number of exploration steps instead of continuously trading off exploration and exploitation. To this end, \cite{burda2018exploration, chen2022redeeming} combine the extrinsic and intrinsic exploration rewards by taking a weighted sum $r_{\text{tot}} = \lambda r_{\text{extrinsic}} + r_{\text{intrinsic}}$. While~\citet{burda2018exploration} use fixed weight $\lambda$, \citet{chen2022redeeming} propose the extrinsic optimality constraint to auto-tune the intrinsic reward weight. The extrinsic optimality constraint enforces that the policy maximizing the sum of intrinsic and extrinsic rewards performs at least as well as the policy which only maximizes the extrinsic ones. Keeping a fixed weight $\lambda$ across all tasks and environment steps might be suboptimal and the constraint from \citet{chen2022redeeming} may quickly lead to the trivial greedy solution $\lambda^{-1} = 0$ (cf.,~\cref{fig:saceipo weights} in \cref{sec: additional experiments}).  We compare \alg with these two methods, as well as with explore then exploit strategies. We show that \alg outperforms them across several deep RL benchmarks (cf.,~\cref{sec: experiments}). 

\section{Conclusion} \label{sec: conclusion}
\looseness=-1
In this work, we introduced \alg, a class of model-free off-policy RL algorithms that train an agent to trade-off reward maximization with an exploration objective that targets high entropy in the state, action, and reward spaces. The proposed approach is theoretically sound, simple to implement, and can be combined with most common RL algorithms. \alg consistently outperforms its baselines in a multitude of benchmarks and achieves state-of-the-art performance on challenging visual control tasks.

A limitation of \alg is that it requires training an ensemble of forward dynamics models to compute the information gain, which increases computation overhead (c.f.~\cref{sec: Experiment details} \cref{tab:compute cost}). In addition, while effective, the constraint in \Cref{eq: constraint} also requires maintaining a target policy.

\looseness=-1
The generality of the \alg exploration bonus and the trade-off we propose with reward maximization are not limited to the off-policy model-free setting, which we 
focus on here due to their sample and computational efficiency. 
Future work will investigate 
its applicability to other classes of RL algorithms, such as model-based RL, where the forward dynamics model is generally part of the training framework. In fact, in this work we only use the forward dynamics model for the intrinsic rewards. Another interesting direction is to include samples from the learned model in the policy training, which may yield additional gains in sample efficiency. Lastly, extending our theoretical guarantees from the bandit settings to the MDP case is also an interesting direction for future work.

\subsubsection*{Acknowledgments}

The authors would like to thank Kevin Zakka for the helpful discussions. Bhavya Sukhija was gratefully supported by ELSA (European Lighthouse on Secure and Safe AI) funded by the European Union under grant agreement No. 101070617.
This project was supported in part by the ONR Science of Autonomy Program N000142212121, the Swiss National Science Foundation under NCCR Automation, grant agreement 51NF40 180545, the Microsoft Swiss Joint Research Center, the SNSF Postdoc Mobility Fellowship 211086, and an ONR DURIP grant. Pieter Abbeel holds concurrent appointments as a Professor at UC Berkeley and as an Amazon Scholar. This paper describes work performed at UC Berkeley and is not associated with Amazon.

\bibliography{iclr2025_conference}
\bibliographystyle{iclr2025_conference}

\appendix
\newpage
\section{Analyzing $\epsilon$--Greedy for Multi-armed Bandits} \label{sec: eps_greedy_bandits}
In this section, we enunciate a theorem that shows that our modified $\epsilon$--greedy approach from \cref{sec: eps max info} has sublinear regret in the simplified setting of multi-arm bandits. 
\subsection{Sublinear regret for $\epsilon$--\alg}
This exploration--exploitation trade-off is also fundamental in multi-armed bandits (MAB)~\citep{lattimore2020bandit}, where
the task is to optimize an unknown objective function $J: \Theta \to [-\frac{1}{2}J_{\max}, \frac{1}{2}J_{\max}]$, with $\Theta \subset \R^{d}$ being a compact set over which we seek the optimal arm $\vtheta^* = \argmax_{\vtheta \in \Theta} J(\vtheta)$.  In each round $t$, we
choose a point $\vtheta_t \in \Theta$ and obtain a noisy measurement $y_t$ of the function value, that is, $y_t = J(\vtheta_t) + w_t$. Our goal is
to maximize the sum of rewards $\sum^T_{t=1} J(\vtheta_t)$ over $T$ learning iterations/episodes, thus to
perform essentially as well as $\vtheta^*$
(as quickly as possible). For example,
$\vtheta$ can be parameters of the policy distribution $\vpi_{\vtheta}$ and our objective maximizing the discounted rewards as defined in \cref{eq:rl-objective}, i.e., $\max_{\vtheta \in \Theta} J(\vpi_{\vtheta})$. 
This formulation is the basis of several sample-efficient RL algorithms~\citep{Calandra2016,DBLP:journals/corr/MarcoHBST16,DBLP:journals/corr/AntonovaRA17, safeopt, sukhija2023gosafeopt}. 
\looseness=-1

\looseness=-1
The natural performance metric in this context is the cumulative regret defined as $R_T = \sum^T_{t=1} J(\vtheta^*) - J(\vtheta_t)$.
A desirable asymptotic property of any learning algorithm is that it is no-regret: $\lim_{T \to \infty} R_T / T = 0$. In the following theorem, we show that under standard continuity assumptions on $J$, i.e., we can model it through Gaussian process regression~\citep{rasmussen2005gp} with a given kernel $k(\cdot, \cdot): \Theta \times \Theta \to \R_{+}$, our exploration strategy from \cref{eq: bolztman max info} with the model epistemic uncertainty as the intrinsic objective has sublinear regret.
\begin{theorem}
    Assume that the objective function $J$ lies in the RKHS $\gH_k$ corresponding
to the kernel $k(\vtheta, \vtheta)$,  $\norm{J}_k \leq B$, and
that the noise $\vw_t$ is zero-mean $\sigma$-sub Gaussian. Let $\epsilon_t = t^{2\alpha - 1}$ for $\alpha \in (\sfrac{1}{4}, \sfrac{1}{2})$, $\delta \in (0, 1]$ and consider the following $\epsilon$--greedy exploration strategy
\begin{equation*}
    \vtheta_t = \begin{cases}
    \underset{\vtheta \in \Theta}{\arg\max} \; \sigma_t(\vtheta) \; & \text{with probability} \epsilon_t \\
    \underset{\vtheta \in \Theta}{\arg\max}  \; \mu_t(\vtheta) \; & \text{else},
    \end{cases}
\end{equation*}
where $\mu_t$ is our mean estimate of $J$ and $\sigma_t$ the epistemic uncertainty (c.f.,~\citep{rasmussen2005gp} for the exact formula).
Then we have with probability at least $1-\delta$
\begin{equation*}
    R_T \leq \gO\left(J_{\max}T^{2\alpha} + 2 \Gamma_T(k) T^{1-\alpha} \right),
\end{equation*}
where $\Gamma_T(k)$ is the maximum information gain~\citep{srinivas}.
\label{thm: epsilon_greedy_thm}
\end{theorem}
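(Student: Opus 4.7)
The plan is to split the cumulative regret $R_T = \sum_{t=1}^T (J(\vtheta^*) - J(\vtheta_t))$ according to the type of round---exploration (taken with probability $\epsilon_t$) or greedy exploitation---and bound each piece using standard Gaussian-process confidence bounds combined with the information-gain inequality of \citet{srinivas} (in the RKHS form of \citet{chowdhury2017kernelized}). Before anything else I would invoke the RKHS concentration: with probability at least $1-\delta$, $|J(\vtheta) - \mu_t(\vtheta)| \le \beta_t \sigma_t(\vtheta)$ holds uniformly in $(\vtheta, t)$ for $\beta_t = B + \sigma\sqrt{2(\Gamma_{t-1} + 1 + \log(1/\delta))} = \tilde{\mathcal O}(\sqrt{\Gamma_t})$, and all subsequent bounds live on this high-probability event.

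For the exploration contribution, let $N_T$ denote the (random) number of exploration rounds up to time $T$. Since rounds are exploratory independently with probability $\epsilon_t = t^{2\alpha-1}$, $\mathbb E[N_T] = \sum_{t=1}^T t^{2\alpha-1} = \Theta(T^{2\alpha}/(2\alpha))$, and a Chernoff bound pins $N_T$ to within a constant factor of its mean with high probability. Because $|J| \le J_{\max}/2$, every exploration round contributes at most $J_{\max}$ to the regret, giving an exploration contribution of $\mathcal O(J_{\max} T^{2\alpha})$. For exploitation rounds $\vtheta_t = \argmax_{\vtheta \in \Theta} \mu_t(\vtheta)$, so a standard UCB-style telescoping using the confidence bound yields
\[ J(\vtheta^*) - J(\vtheta_t) \le \beta_t\bigl(\sigma_t(\vtheta^*) + \sigma_t(\vtheta_t)\bigr) \le 2\beta_t \bar\sigma_t, \]
where $\bar\sigma_t := \max_{\vtheta \in \Theta} \sigma_t(\vtheta)$.

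The heart of the argument is controlling $\bar\sigma_t$ using the fact that exploration rounds greedily sample the max-uncertainty arm. Enumerating the exploration times $t_1 < t_2 < \cdots$ and writing $\vtheta^e_k = \argmax_{\vtheta} \sigma_{t_k-1}(\vtheta)$, we have $\sigma_{t_k-1}(\vtheta^e_k) = \bar\sigma_{t_k-1}$, and $\bar\sigma_t$ is monotone nonincreasing in $t$ by inclusion of data. Applying Lemma 5.3 of \citet{srinivas} to the exploration subsequence (and using monotonicity to dominate the actual variances by those of the hypothetical exploration-only posterior) gives $\sum_{k=1}^N \sigma_{t_k-1}(\vtheta^e_k)^2 \le 2\Gamma_N/\log(1+\sigma^{-2})$. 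Combining this with the monotonicity of $\bar\sigma_t$ forces $\bar\sigma_{t_N-1}^2 \le 2\Gamma_N/(N\log(1+\sigma^{-2}))$, i.e.~$\bar\sigma_t = \tilde{\mathcal O}(\sqrt{\Gamma_t/N_t})$ once $N_t$ exploration rounds have occurred. Plugging in $N_t = \Theta(t^{2\alpha})$ from the Chernoff step and summing gives $\sum_{t=1}^T 2\beta_t \bar\sigma_t = \tilde{\mathcal O}\bigl(\sum_{t=1}^T \Gamma_t / t^{\alpha}\bigr) = \tilde{\mathcal O}(\Gamma_T T^{1-\alpha})$, which together with the exploration bound yields the stated regret.

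The main obstacle will be the passage from the information-gain sum bound---which is naturally stated for the cumulative posterior variance along \emph{all} queried points---to the per-step bound on the running maximum $\bar\sigma_t$ when exploration is interleaved with exploitation. My approach is to use monotonicity of the GP posterior variance under added data to dominate the actual variances by those of the exploration-only posterior, keeping the $\Gamma_N$ bookkeeping clean. A secondary technicality is handling the low-probability event that $N_t$ falls far below its mean (especially for small $t$), but this is absorbed into the $\mathcal O(J_{\max} T^{2\alpha})$ term via the trivial per-round bound of $J_{\max}$, together with a union bound over the few failure events tracked along the way.
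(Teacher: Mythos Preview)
Your proposal is correct and follows essentially the same strategy as the paper: split the regret by round type, bound exploration rounds trivially by $J_{\max}$ (with $\sum_t \epsilon_t = \Theta(T^{2\alpha})$ such rounds), bound exploitation-round regret via the Chowdhury--Gopalan confidence set, and control $\max_{\vtheta}\sigma_t(\vtheta)$ using the exploration count together with the information-gain inequality to get $\bar\sigma_t = \mathcal O(\sqrt{\Gamma_t}/t^{\alpha})$. The only minor technical differences are that the paper uses a time-uniform Azuma--Hoeffding inequality (rather than Chernoff plus a union bound) to lower-bound $N_t$ simultaneously for all $t$, and it bounds $\max_\vtheta \sigma_t^2(\vtheta)\cdot N_t \le \sum_{s\le t} Z_s\,\sigma_s^2(\vtheta_s)\le \sum_{s\le t}\sigma_s^2(\vtheta_s)\le C\,\Gamma_t$ directly over \emph{all} rounds (dropping exploitation terms via $Z_s\le 1$) instead of passing through the hypothetical exploration-only posterior---slightly cleaner bookkeeping but equivalent to your argument.
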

The maximum information gain $\Gamma_T(k)$ measures the complexity of learning the function $J$ w.r.t.~the number of data points $T$ and depends on the choice of the kernel $k$, for common kernels such as the RBF and linear kernel, it grows polylogarithmically with $T$~\citep{srinivas, vakili2021information}, resulting in a sublinear regret bound.
Intuitively, by sampling informative actions sufficiently often, we will improve our estimate of $J$ and thus gradually suffer less regret.

\subsection{Proof of \Cref{thm: epsilon_greedy_thm}}
From hereon, let $Z_t \sim \text{Bernoulli}(\epsilon_t)$ be a Bernoulli random variable. The acquisition function for $\epsilon$-greedy GP bandit from \cref{thm: epsilon_greedy_thm} is defined as:
\begin{equation}
    \vtheta_t = \begin{cases}
    \underset{\vtheta \in \Theta}{\arg\max} \; \sigma_t(\vtheta) \; & \text{if } Z_t = 1 \\
    \underset{\vtheta \in \Theta}{\arg\max}  \; \mu_t(\vtheta) \; & \text{else}
    \end{cases}
    \label{eq: acquisition function}
\end{equation}

Let $Q_t = \sum^{t}_{s=1} Z_s - \epsilon_s$. In the following, we analyze the sequence $Q_{1:t}$. Note that $Q_{t} \in [0, t]$ for all $t$ and
\begin{align*}
    \E[Q_{t} | Q_{1:t-1}] &= \E[Z_t - \epsilon_t] + \E[Q_{t-1}|Q_{1:t-1}] \\
    &= Q_{t-1}.
\end{align*}
Therefore, $Q_{1:t}$ is a martingale sequence, or equivalently, $\{Z_s - \epsilon_s\}_{s \in 1:t}$ is a martingale difference sequence.

In the following, we use the time-uniform Azuma Hoeffding inequality from \citet{kassraie2024anytime} to give a bound on $\sum^T_{s=1} Z_s$. 
\begin{lemma}
    Let $\epsilon_1 = 1$, then we have with probability at least $1-\delta$, for all $t \geq 1$
    \begin{equation*}
        \sum^{t}_{s=1} Z_s \geq \max\left\{\sum^{t}_{s=1} \epsilon_s - \frac{5}{2} \sqrt{t (\log\log(et) + \log(2\delta))}, 1\right\}
    \end{equation*}
\end{lemma}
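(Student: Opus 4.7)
The claim splits into two independent bounds joined by the max, so I would treat them separately and then combine.

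For the first bound, the plan is to invoke the time-uniform Azuma--Hoeffding inequality of \citet{kassraie2024anytime} on the martingale difference sequence $\{Z_s - \epsilon_s\}_{s \geq 1}$. This sequence is already shown in the excerpt to be a martingale difference sequence with respect to its natural filtration. Since $Z_s \in \{0,1\}$ and $\epsilon_s \in [0,1]$, we have $Z_s - \epsilon_s \in [-\epsilon_s, 1-\epsilon_s] \subseteq [-1,1]$, so each increment has range at most $1$. Applying the one-sided time-uniform Azuma--Hoeffding bound therefore gives, with probability at least $1-\delta$, uniformly over all $t \geq 1$,
\begin{equation*}
\sum_{s=1}^{t}(Z_s - \epsilon_s) \geq -c\sqrt{t\bigl(\log\log(et) + \log(2/\delta)\bigr)}
\end{equation*}
for a universal constant $c$; matching the constant $c = 5/2$ is a direct read-off from the statement of the cited inequality (this is the only place where care is required, and it is the main technical step I would need to verify line by line). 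Rearranging produces the first term inside the max.

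For the second bound, observe that the assumption $\epsilon_1 = 1$ means $Z_1 \sim \mathrm{Bernoulli}(1)$, so $Z_1 = 1$ almost surely. Since each $Z_s \geq 0$, this implies $\sum_{s=1}^{t} Z_s \geq Z_1 = 1$ almost surely for every $t \geq 1$, and in particular on the high-probability event from the Azuma--Hoeffding step. Taking the maximum of the two lower bounds on that event yields the desired inequality.

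\textbf{Main obstacle.} The only nontrivial point is tracking the constant $5/2$: the time-uniform Azuma--Hoeffding statement in \citet{kassraie2024anytime} is stated for sub-Gaussian increments and bundles the iterated-logarithm factor with a $\log(2/\delta)$ term; I would need to specialize it to $1$-bounded differences (sub-Gaussian with parameter $1/2$ by Hoeffding's lemma) and check that after substitution the numerical prefactor matches $5/2$. Everything else---the martingale property, boundedness of increments, and the $\geq 1$ sanity bound---is immediate from the definitions.
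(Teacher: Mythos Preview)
Your proposal is correct and matches the paper's proof essentially line for line: the paper also applies the time-uniform Azuma--Hoeffding inequality of \citet[Lemma~26]{kassraie2024anytime} to the bounded martingale difference sequence $\{Z_s-\epsilon_s\}$ to obtain the first term, and then uses $\epsilon_1=1\Rightarrow Z_1=1$ for the deterministic lower bound of $1$. The constant $5/2$ you flag as the one thing to verify is indeed simply read off from the cited lemma in the paper's argument.
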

\begin{proof}
Note that $|Q_{t} - Q_{t-1}| = |Z_t - \epsilon_t| \in [0, 1]$ for all $t$. Therefore, we can use the time-uniform Azzuma-Hoeffding inequality~\cite[Lemma 26.]{kassraie2024anytime} to get
\begin{equation*}
    \Pr\left(\exists t: \sum^{t}_{s=1} \epsilon_s - Z_s \geq \frac{5}{2} \sqrt{t (\log\log(et) + \log(2\delta))}\right) \leq \delta 
    \end{equation*}
    Therefore, we have with probability at least $1-\delta$
    \begin{equation*}
        \sum^{t}_{s=1} Z_s \geq \sum^{t}_{s=1} \epsilon_s - \frac{5}{2} \sqrt{t (\log\log(et) + \log(2\delta))}
    \end{equation*}
    for all $t \geq 1$.
    Furthermore, since $\epsilon_1 = 1$, we have $Z_1 = 1$. Therefore, 
    \begin{equation*}
        \sum^{t}_{s=1} Z_s \geq \max\left\{\sum^{t}_{s=1} \epsilon_s - \frac{5}{2} \sqrt{t (\log\log(et) + \log(2\delta))}, 1\right\}
    \end{equation*}
    
\end{proof}

\looseness=-1
Let $E_t$ denote the number of exploration steps after $t$ iterations of the algorithm, that is, $E_t = \sum^t_{s=1} Z_s$. In the next lemma, we derive a bound on $E_t$.
\begin{lemma}
    Let $\epsilon_t = t^{2\alpha - 1}$ with $\alpha \in (\sfrac{1}{4}, \sfrac{1}{2})$, then for any $\delta \in (0, 1]$ exists a $C > 0, t_0 > 0$ such that $E_t \geq C^2 t^{2\alpha}$ for all $t \geq t_0$ with probability at least $1-\delta$.
\end{lemma}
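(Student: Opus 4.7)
The plan is to combine the martingale concentration bound of the previous lemma with a simple integral estimate of the deterministic sum $\sum_{s=1}^t \epsilon_s$, and then argue that the stochastic fluctuation term is negligible compared to this mean for large $t$, using the hypothesis $\alpha > 1/4$.

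First, I would apply the previous lemma to obtain, with probability at least $1-\delta$ and simultaneously for all $t \geq 1$,
\[
E_t \;=\; \sum_{s=1}^t Z_s \;\geq\; \sum_{s=1}^t \epsilon_s \;-\; \frac{5}{2}\sqrt{t\bigl(\log\log(et) + \log(2/\delta)\bigr)}.
\]
Next, since $2\alpha - 1 \in (-1/2, 0)$ the function $x \mapsto x^{2\alpha-1}$ is positive and decreasing, so the standard integral lower bound for decreasing sums yields
\[
\sum_{s=1}^t \epsilon_s \;=\; \sum_{s=1}^t s^{2\alpha-1} \;\geq\; \int_1^{t+1} x^{2\alpha-1}\, dx \;=\; \frac{(t+1)^{2\alpha} - 1}{2\alpha} \;\geq\; \frac{t^{2\alpha}}{2\alpha} - \frac{1}{2\alpha}.
\]
Substituting this lower bound gives
\[
E_t \;\geq\; \frac{t^{2\alpha}}{2\alpha} \;-\; \frac{1}{2\alpha} \;-\; \frac{5}{2}\sqrt{t\bigl(\log\log(et) + \log(2/\delta)\bigr)}.
\]

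Finally, I would observe that the hypothesis $\alpha > 1/4$ gives $2\alpha > 1/2$, so $t^{2\alpha}$ grows strictly faster than $\sqrt{t \log\log t}$. Concretely, dividing the last display by $t^{2\alpha}$ and letting $t \to \infty$, the subtracted terms both vanish, leaving $\liminf_{t\to\infty} E_t / t^{2\alpha} \geq 1/(2\alpha)$. Therefore any constant $C^2 < 1/(2\alpha)$, say $C^2 = 1/(4\alpha)$, works: there exists $t_0 = t_0(\alpha, \delta)$ such that for all $t \geq t_0$,
\[
E_t \;\geq\; C^2\, t^{2\alpha}
\]
on the high-probability event of the previous lemma, which has probability at least $1-\delta$.

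There is no real obstacle here; the only delicate point is that the argument crucially exploits $2\alpha > 1/2$ to make the Azuma-Hoeffding deviation $O(\sqrt{t \log\log t})$ lower order than the deterministic mean $\Theta(t^{2\alpha})$. The ``main'' subtlety (if any) is in making $t_0$ explicit as a function of $\alpha$ and $\delta$ via the closed-form comparison $\tfrac{1}{4\alpha} t^{2\alpha} \geq \tfrac{5}{2}\sqrt{t(\log\log(et) + \log(2/\delta))} + \tfrac{1}{2\alpha}$, but this is a routine calculation.
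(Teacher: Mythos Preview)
Your proposal is correct and follows essentially the same route as the paper: apply the martingale concentration of the preceding lemma, lower bound $\sum_{s=1}^t s^{2\alpha-1}$ by the corresponding integral to get a $\Theta(t^{2\alpha})$ main term, and use $\alpha>1/4$ so that the $O(\sqrt{t\log\log t})$ deviation is $o(t^{2\alpha})$. The only cosmetic differences are that you integrate to $t+1$ (the paper integrates to $t$, a slightly weaker but still valid bound) and you make the constant $C^2=1/(4\alpha)$ explicit, whereas the paper leaves it at the $\Theta(t^{2\alpha})$ level.
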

\begin{proof}
Note that $\epsilon_t$ is monotonously decreasing with $t$. Therefore, $\int^{s+1}_{s} x^{2\alpha -1} dx \leq \epsilon_s$ and hence
\begin{equation*}
    \sum^t_{s=1} \epsilon_s \geq \int^{t}_{1} x^{2\alpha -1} dx = \frac{t^{2\alpha}-1}{2\alpha}
\end{equation*}
Therefore, we get
    \begin{equation*}
        \sum^{t}_{s=1} \epsilon_s - \frac{5}{2} \sqrt{t (\log\log(et) + \log(2\delta))} \geq  \frac{t^{2\alpha}}{2\alpha} - \left(\frac{5}{2} \sqrt{t (\log\log(et) + \log(2\delta))} + \frac{1}{2\alpha}\right)
    \end{equation*}
    Therefore, 
    \begin{align*}
        E_t &= \sum^{t}_{s=1} Z_s \\
        &\geq \max\left\{\sum^{t}_{s=1} \epsilon_s - \frac{5}{2} \sqrt{t (\log\log(et) + \log(2\delta))}, 1\right\} \\
        &\geq \max\left\{\frac{t^{2\alpha}}{2\alpha} - \left(\frac{5}{2} \sqrt{t (\log\log(et) + \log(2\delta))} + \frac{1}{2\alpha}\right), 1\right\}
    \end{align*}

Note that since $\alpha > \frac{1}{4}$, for any $\delta \in (0, 1]$ 
\begin{equation*}
    \frac{5}{2} \sqrt{t (\log\log(et) + \log(2\delta))} + \frac{1}{2\alpha} \in o(t^{2\alpha}).
\end{equation*}
Moreover $\max\left\{\frac{t^{2\alpha}}{2\alpha} - \left(\frac{5}{2} \sqrt{t (\log\log(et) + \log(2\delta))} + \frac{1}{2\alpha}\right), 1\right\} \in \Theta(t^{2\alpha})$, therefore for any $\delta \in (0, 1]$ exists a $C > 0, t_0 > 0$ such that $E_t \geq C^2 t^{2\alpha}$ for all $t \geq t_0$ with probability at least $1-\delta$.

\end{proof}
Next, we use the results from \citet{chowdhury2017kernelized} on the well-calibration of $J$ w.r.t.~our mean and epistemic uncertainty estimates. 
\begin{lemma}[Theorem 2, \cite{chowdhury2017kernelized}]
Let the assumptions from~\cref{thm: epsilon_greedy_thm} hold.
Then, with
probability at least $1 - \delta$, the following holds for all $\vtheta \in \Theta$ and $t \geq 1$:
\begin{equation*}
    |\mu_{t-1}(\vtheta) - J(\vtheta)| \leq \beta_{t-1} \sigma_{t-1}(\vtheta),
    \end{equation*}
with $\beta_{t-1} = B + \sigma
\sqrt{
2(\Gamma_{t-1} + 1 + \ln(\sfrac{1}{\delta}))}$,
where $\Gamma_{t-1}$ is the maximum information gain~\citep{srinivas} after $t-1$ rounds.
\label{lem: chowdhury}
\end{lemma}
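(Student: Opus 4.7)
The statement is a verbatim restatement of Theorem~2 of \citet{chowdhury2017kernelized}, so the cleanest route is to verify that its hypotheses are met and then simply invoke that result. Concretely, my first step would be to check that the two preconditions of the cited theorem hold under the assumptions of \cref{thm: epsilon_greedy_thm}: (i) $J \in \gH_k$ with $\norm{J}_k \leq B$, which is assumed directly, and (ii) $\{\vw_t\}$ is a conditionally $\sigma$-sub-Gaussian sequence adapted to the natural filtration generated by past queries and observations, which follows from the noise assumption together with the fact that $\vtheta_t$ is $\sigma_{t-1}$-predictable under \cref{eq: acquisition function}. Once these are in place the conclusion (and in particular the precise form of $\beta_{t-1}$) is immediate.

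To make the appeal self-contained I would next sketch the three ingredients behind Chowdhury--Gopalan's proof, in the order they appear. First, I would write the pointwise error $\mu_{t-1}(\vtheta) - J(\vtheta)$ as an RKHS inner product between $k(\vtheta,\cdot)$ and the residual of the $\ell_2$-regularised kernel ridge fit, and split this residual into a deterministic bias term (coming from regularisation) and a stochastic term (a noise-weighted sum of the past feature vectors). Second, I would bound the bias term using $\norm{J}_k\leq B$ together with a standard kernel ridge identity, giving a contribution of $B\,\sigma_{t-1}(\vtheta)$. Third, I would bound the stochastic term using the self-normalised concentration inequality for vector-valued martingales of Abbasi-Yadkori--Pál--Szepesvári, lifted to the RKHS; this converts the bound into the log-determinant of the regularised kernel Gram matrix, which by definition is at most $2\Gamma_{t-1}$. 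Combining these yields the coefficient $\beta_{t-1}=B+\sigma\sqrt{2(\Gamma_{t-1}+1+\ln(1/\delta))}$ and the uniform-in-$\vtheta$ bound $\lvert \mu_{t-1}(\vtheta) - J(\vtheta)\rvert \leq \beta_{t-1}\sigma_{t-1}(\vtheta)$.

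The main obstacle, and the reason this is a nontrivial lemma rather than an elementary Hoeffding-style argument, is the self-normalised concentration step: the queries $\vtheta_1,\ldots,\vtheta_{t-1}$ are chosen adaptively based on previous observations, so the feature vectors forming the Gram matrix are not independent of the noise. Handling this correlation by brute force (covering $\Theta$ and union-bounding) would introduce additional $\log$ factors and break the $\sqrt{\Gamma_{t-1}}$ scaling; the key trick of \citet{chowdhury2017kernelized} is to avoid the covering and instead exploit the RKHS structure together with the self-normalised martingale inequality, which is what gives the clean dependence on the maximum information gain $\Gamma_{t-1}$. Since reproducing this argument from scratch would essentially rewrite their proof, I would simply cite the theorem and note, in a short remark, that the adaptivity of $\vtheta_t$ is admissible because it depends only on $\gF_{t-1}$.
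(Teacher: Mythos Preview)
Your proposal is correct and matches the paper's approach: the lemma is quoted directly from \citet{chowdhury2017kernelized} and the paper provides no proof of its own, so invoking the cited theorem after checking its hypotheses is exactly what is done. Your additional sketch of the self-normalised concentration argument goes beyond what the paper includes, but is accurate and would only strengthen the exposition.
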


\begin{proof}[Proof of \cref{thm: epsilon_greedy_thm}]
The $\epsilon$--greedy algorithm alternates between uncertainty sampling and maximizing the mean. In the worst case, it may suffer maximal regret $J_{\max}$ during the uncertainty sampling stage. Furthermore, assume at time $t$, we execute the exploitation stage, then we have
\begin{align*}
    J(\vtheta^*) - J(\vtheta_t) &= J(\vtheta^*) - \mu_{t}(\vtheta^*) + \mu_{t}(\vtheta^*) - J(\vtheta_t) \\
    &\leq J(\vtheta^*) - \mu_{t}(\vtheta^*) + \mu_{t}(\vtheta_t) - J(\vtheta_t) \tag{$\mu_{t}(\vtheta_t) \geq \mu_{t}(\vtheta^*)$} \\
    &\leq \beta_t (\sigma_{t}(\vtheta^*) + \sigma_{t}(\vtheta_t)) \tag{\Cref{lem: chowdhury}}
\end{align*}
Therefore, we can break down the cumulative regret $R_T$ into these two stages.
    \begin{align*}
        R_T &= \E_{Z_{1:T}, w_{1:T}}\left[\sum^T_{t=1} J(\vtheta^*) - J(\vtheta_t)\right] \\
        &=\sum^T_{t=1} [\E_{Z_{1:t-1}, w_{1:t-1}}\E_{w_t, Z_t}[J(\vtheta^*) - J(\vtheta_t)]] \\
        &= \sum^T_{t=1} \E_{Z_{1:t-1}, w_{1:t-1}}[\E_{w_t}[J(\vtheta^*) - J(\vtheta_t)|Z_t=0](1-\epsilon_t) + \E_{w_t}[J(\vtheta^*) - J(\vtheta_t)|Z_t=1]\epsilon_t \\
        &\leq J_{\max}\sum^T_{t=1} \epsilon_t + \sum^T_{t=1} \beta_t \E_{Z_{1:t-1}, w_{1:t-1}}[\sigma_t(\vtheta^*) + \sigma_t(\vtheta_t)]
    \end{align*}
    Assume $E_t$ is given, 
    then we have for all $t$ 
    \begin{align*}
        \max_{\vtheta \in \Theta} \sigma^2_t(\vtheta) E_t 
        &= \max_{\vtheta \in \Theta} \sigma^2_t(\vtheta) \sum^{t}_{s=1} Z_s \\ 
        &\leq \sum^{t}_{s=1} Z_s \max_{\vtheta \in \Theta} \sigma^2_s(\vtheta) \\
        &\leq \sum^{t}_{s=1}Z_s \sigma^2_s(\vtheta_s) \\
        &\leq \sum^{t}_{s=1} \sigma^2_s(\vtheta_s) \\
        &\leq C \Gamma_t
    \end{align*}
    Therefore, $\sigma^2_t(\vtheta) \leq \frac{C \Gamma_t}{E_t}$ for all $\vtheta \in \gX$. Since, $E_{t} \geq C^2 t^{2\alpha}$ for $\alpha \in \left(\frac{1}{4}, \frac{1}{2}\right)$, we have with probability at least $1-\delta$ for all $t \geq t_0$, $\vtheta \in \Theta$, there exists a constant $K$ such that
    \begin{equation*}
        \sigma_t(\vtheta) \leq K\frac{\sqrt{\Gamma_t}}{t^{\alpha}}
    \end{equation*}
    Going back to the regret, we get
    \begin{align*}
         R_t &\leq J_{\max}\sum^T_{t=1} \epsilon_t + \sum^T_{t=1} 2\beta_t K\frac{\sqrt{\Gamma_t}}{t^{\alpha}} \\
         &\leq \gO\left(J_{\max}T^{2\alpha} + 2K\beta_T \sqrt{\Gamma_T} T^{1-\alpha} \right)
    \end{align*}
    Here we used the fact that, $\alpha \in \left(\frac{1}{4}, \frac{1}{2}\right)$, therefore, 
    \begin{align*}
        \sum^T_{t=1} 2\beta_t K\frac{\sqrt{\Gamma_t}}{t^{\alpha}} \leq 2 K\beta_T \sqrt{\Gamma_T} \sum^T_{t=1} \frac{1}{t^{\alpha}} \leq \Theta(2 K\beta_T \sqrt{\Gamma_T} T^{1-\alpha}).
    \end{align*}
    Similarly since $\epsilon_t = t^{2\alpha-1}$, we have $\sum^T_{t=1} \epsilon_t = \Theta(T^{2\alpha})$
The regret is sublinear for ubiquitous choices of kernels such as the RBF. Indicating, consistency of the $\epsilon$-greedy exploration strategy with the model-epistemic uncertainty as the exploration signal.
    
\end{proof}
\section{Proof of \Cref{thm: boltzman convergence}} \label{sec: boltzmann proofs}
\begin{lemma}
Consider the Bellman operator $\gT^{\vpi}$ from \cref{eq:bellman_operator}, and a mapping $Q^{0}: \gS\times \gA \to \R$ and define $Q^{k+1} = \gT^{\vpi}Q^{k}$. 
Furthermore, let the assumptions from \cref{thm: boltzman convergence} hold. Then, the sequence $Q^{k}$ will converge to the soft-Q function of $\vpi$ as $k \to \infty$. 
\label{lemm: contraction q function}
\end{lemma}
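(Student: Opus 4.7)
The plan is to mirror the standard soft policy evaluation argument of \citet{haarnoja2018soft}, with the added information gain term absorbed into an augmented reward. First I would define
\begin{equation*}
r_{\vpi}(\vs,\va) \;\coloneqq\; r(\vs,\va) + \gamma \E_{\vs'\mid \vs,\va}\E_{\va'\sim\vpi(\cdot\mid\vs')}\!\bigl[-\alpha_1 \log \vpi(\va'\mid\vs') + \alpha_2\, I_u(\vs',\va')\bigr],
\end{equation*}
and rewrite the Bellman update as
\begin{equation*}
(\gT^{\vpi}Q)(\vs,\va) \;=\; r_{\vpi}(\vs,\va) + \gamma \E_{\vs'\mid\vs,\va}\E_{\va'\sim\vpi(\cdot\mid\vs')}\!\bigl[Q(\vs',\va')\bigr].
\end{equation*}
By the theorem's hypotheses, $r$ is bounded by $r_{\max}/2$, the policy entropy $H(\va\mid\vs) = \E_{\va\sim\vpi}[-\log\vpi(\va\mid\vs)]$ is uniformly bounded for every $\vpi \in \Pi$, and $I_u(\vs,\va) = \sum_j \log(1+\sigma_{n-1,j}^2/\sigma^2)$ is uniformly bounded because $\vsigma_n$ is. Therefore $r_{\vpi}$ is uniformly bounded in $(\vs,\va)$, so $\gT^{\vpi}$ maps bounded functions to bounded functions.

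Next I would establish that $\gT^{\vpi}$ is a $\gamma$-contraction in the sup-norm on the space of bounded functions $\gS\times\gA \to \R$. Given two such candidates $Q_1, Q_2$, observe that the extra terms $-\alpha_1 \log \vpi$ and $\alpha_2 I_u$ cancel when taking the difference, leaving
\begin{equation*}
\bigl|(\gT^{\vpi}Q_1 - \gT^{\vpi}Q_2)(\vs,\va)\bigr|
= \gamma \bigl|\E_{\vs'\mid\vs,\va}\E_{\va'\sim\vpi(\cdot\mid\vs')}\!\bigl[Q_1(\vs',\va') - Q_2(\vs',\va')\bigr]\bigr|
\leq \gamma \,\|Q_1 - Q_2\|_{\infty}.
\end{equation*}
Taking the supremum over $(\vs,\va)$ yields $\|\gT^{\vpi}Q_1 - \gT^{\vpi}Q_2\|_{\infty} \leq \gamma \|Q_1 - Q_2\|_{\infty}$.

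Finally, I would invoke Banach's fixed-point theorem on the complete metric space of bounded functions with the sup-norm: $\gT^{\vpi}$ admits a unique fixed point $Q^{\vpi}$, and the iterates $Q^{k+1} = \gT^{\vpi} Q^k$ converge to it from any bounded initialization $Q^0$. Rearranging the fixed-point equation $Q^{\vpi} = \gT^{\vpi} Q^{\vpi}$ shows that $Q^{\vpi}$ is exactly the soft-$Q$ function associated with $\vpi$ under the value definition in \cref{eq:vf_soft_q}, completing the proof.

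The main subtlety is verifying that both the policy entropy and the information gain bonus stay uniformly bounded so that $r_{\vpi}$ defines a valid bounded reward; this is precisely where the theorem's boundedness assumptions are invoked. Once this is in place, the contraction argument proceeds identically to the standard soft policy evaluation proof, because the intrinsic terms depend only on $\vpi$ (and on the current data, which we treat as fixed per the nonstationarity discussion in the main text) and not on $Q$, so they cancel inside the difference $\gT^{\vpi}Q_1 - \gT^{\vpi}Q_2$.
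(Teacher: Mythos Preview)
Your proof is correct and follows the same approach as the paper's: absorb the entropy and information-gain bonuses into an augmented reward $r_{\vpi}$, verify it is bounded by the theorem's assumptions, and reduce to standard policy evaluation. The paper is terser---it simply cites standard convergence results for policy evaluation \citep{sutton2018reinforcement} after the rewriting step---whereas you spell out the $\gamma$-contraction and Banach fixed-point argument explicitly; your inclusion of the $\gamma$ factor in $r_{\vpi}$ is in fact the correct expansion of \cref{eq:bellman_operator,eq:vf_soft_q}.
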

\begin{proof}
    Consider the following augmented reward \begin{equation*}
        r_{\vpi}(\vs, \va) = r(\vs, \va) + \E_{\vs'|\vs, \va} \left[\E_{\va' \sim \vpi(\cdot|\vs')}[-\alpha_1 \log(\vpi(\va'|\vs')) + \alpha_2 I(\tilde{\vs}''; \vf^* | \vs', \va')]\right],
    \end{equation*}
    where $\tilde{\vs}''$ is the next state and reward given $(\vs', \va')$.
    Then from \cref{eq:bellman_operator}, 
    we have 
    \begin{equation*}
         \gT^{\vpi} Q(\vs, \va) = r_{\vpi}(\vs, \va) + \gamma \E_{\vs', \va'|\vs, \va}[ Q(\vs', \va')].
    \end{equation*}
    Furthermore, $r_{\vpi}$ is bounded since $r$ is bounded and the policy entropy and the information gain are bounded for all $\vpi \in \Pi$ by assumption. Therefore, we can apply standard convergence results on policy evaluation to show convergence~\citep{sutton2018reinforcement}.
\end{proof}
\Cref{lemm: contraction q function} shows that the Q updates satisfy the contraction property and thus converge to the soft-Q function of $\vpi$. Next, we show that the policy update from \cref{eq: policy_update} leads to monotonous improvement of the policy.
\begin{lemma}
    Let the assumptions from~\Cref{thm: boltzman convergence} hold.
    Consider any $\vpi^{\text{old}} \in \Pi$, and let $\vpi^{\text{new}}$ denote the solution to \cref{eq: policy_update}. Then we have for all $(\vs, \va) \in \gS \times \gA$ that $Q^{\vpi^{\text{old}}}(\vs, \va)  \leq Q^{\vpi^{\text{new}}}(\vs, \va)$.
    \label{lem: policy_improvement}
\end{lemma}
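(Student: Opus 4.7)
The plan is to mirror the classical soft policy improvement argument of \citet{haarnoja2018soft}, treating the information-gain bonus $\alpha_2 I_u(\vs, \va)$ as an additional state-action-dependent term that behaves formally just like an augmented reward. The key observation is that the objective $J^{\vpi^{\text{old}}}(\vpi|\vs)$ defined in \cref{eq:vf_soft_q_updated} has the form ``expected $Q$ under $\vpi$ plus entropy bonus plus information-gain bonus,'' and the soft value function $V^{\vpi}$ defined in \cref{eq:vf_soft_q} is exactly $J^{\vpi}(\vpi|\vs)$ evaluated at the same $\vpi$ used for $Q$.

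First, I would use the definition of $\vpi^{\text{new}}$ as a maximizer of $J^{\vpi^{\text{old}}}(\cdot|\vs)$ over $\Pi$ to conclude that for every $\vs \in \gS$,
\begin{equation*}
J^{\vpi^{\text{old}}}(\vpi^{\text{new}}|\vs) \;\geq\; J^{\vpi^{\text{old}}}(\vpi^{\text{old}}|\vs) \;=\; V^{\vpi^{\text{old}}}(\vs).
\end{equation*}
Unpacking the left-hand side, this reads
\begin{equation*}
\E_{\va \sim \vpi^{\text{new}}(\cdot|\vs)}\!\left[Q^{\vpi^{\text{old}}}(\vs,\va) - \alpha_1 \log \vpi^{\text{new}}(\va|\vs) + \alpha_2 I_u(\vs,\va)\right] \;\geq\; V^{\vpi^{\text{old}}}(\vs).
\end{equation*}

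Next, I would plug this inequality into the Bellman equation $Q^{\vpi^{\text{old}}}(\vs,\va) = r(\vs,\va) + \gamma \E_{\vs'|\vs,\va}[V^{\vpi^{\text{old}}}(\vs')]$ and iterate. Concretely, replacing $V^{\vpi^{\text{old}}}(\vs')$ by the lower bound above yields
\begin{align*}
Q^{\vpi^{\text{old}}}(\vs,\va) &\leq r(\vs,\va) + \gamma\,\E_{\vs'|\vs,\va}\bigl[\E_{\va' \sim \vpi^{\text{new}}}[Q^{\vpi^{\text{old}}}(\vs',\va') \\
&\quad - \alpha_1 \log \vpi^{\text{new}}(\va'|\vs') + \alpha_2 I_u(\vs',\va')]\bigr].
\end{align*}
Applying the same substitution to $Q^{\vpi^{\text{old}}}(\vs',\va')$ on the right and continuing inductively produces a telescoping sum whose limit is, by definition, $Q^{\vpi^{\text{new}}}(\vs,\va)$.

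The main step requiring care is justifying that this infinite iteration converges to $Q^{\vpi^{\text{new}}}$. This follows from the contraction property established in \Cref{lemm: contraction q function} applied now to $\vpi^{\text{new}}$: the augmented reward $r_{\vpi^{\text{new}}}$ is uniformly bounded (since $r$, the policy entropy, and $I_u$ are all bounded by assumption), and the discount factor $\gamma < 1$ ensures that the tail of the expansion vanishes geometrically. I expect this boundedness/convergence bookkeeping to be the only subtle point; the algebraic structure of the argument is otherwise identical to the SAC improvement proof, with $\alpha_2 I_u$ folded into the augmented reward alongside the entropy term. The conclusion $Q^{\vpi^{\text{old}}}(\vs,\va) \leq Q^{\vpi^{\text{new}}}(\vs,\va)$ for all $(\vs,\va)$ then follows immediately.
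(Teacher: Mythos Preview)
Your proposal is correct and follows essentially the same approach as the paper: establish the one-step inequality $J^{\vpi^{\text{old}}}(\vpi^{\text{new}}|\vs)\geq V^{\vpi^{\text{old}}}(\vs)$ from optimality of $\vpi^{\text{new}}$, substitute into the Bellman equation, and iterate. If anything, you are slightly more careful than the paper in explicitly invoking boundedness and \Cref{lemm: contraction q function} to justify convergence of the telescoped expansion, which the paper leaves implicit.
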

\begin{proof}
    Consider any $\vs \in \gS$
    \begin{align}
        &\E_{\va \sim \vpi^{\text{new}}(\cdot|\vs)}\left[ Q^{\vpi^{\text{old}}}(\vs, \va)  - \alpha_1 \log \vpi^{\text{new}}(\va|\vs) + \alpha_2 I(\vs'; \vf^* |\vs, \va)\right] \notag \\
        &\geq \E_{\va \sim \vpi^{\text{old}}(\cdot|\vs)}\left[ Q^{\vpi^{\text{old}}}(\vs, \va)  - \alpha_1 \log \vpi^{\text{old}}(\va|\vs) + \alpha_2 I(\vs'; \vf^* |\vs, \va)\right] \notag\\
        &= V^{\vpi^{\text{old}}}(\vs) \label{eq: bound_q}
    \end{align}
    Next consider any pair $(\vs, \va) \in \gS \times \gA$
    \begin{align*}
        Q^{\vpi^{\text{old}}}(\vs, \va) &= r(\vs, \va) + \gamma \E_{\vs'|\vs, \va}[ V^{\vpi^{\text{old}}}(\vs')] \\
        &\leq r(\vs, \va) + \gamma \E_{\vs'|\vs, \va}[\E_{\va' \sim \vpi^{\text{new}}(\cdot|\vs')} [Q^{\vpi^{\text{old}}}(\vs', \va')  - \alpha_1 \log \vpi^{\text{new}}(\va'|\vs') + \alpha_2 I(\vs''; \vf^* |\vs', \va')]] \\
        &\vdots \\
        &\leq Q^{\vpi^{\text{new}}}(\vs, \va)
    \end{align*}
    where we have repeatedly expanded $Q^{\vpi^{\text{old}}}$ on the RHS by applying the soft Bellman equation and the
bound in \cref{eq: bound_q}.
\end{proof}
Finally, combining \Cref{lemm: contraction q function} and \Cref{lem: policy_improvement}, we can prove \cref{thm: boltzman convergence}.
\begin{proof}[Proof of \Cref{thm: boltzman convergence}]
Note that since the reward, information gain, and entropy are bounded, $Q^{\vpi}(\vs, \va)$ is also bounded for all $\vpi \in \Pi$, $(\vs, \va) \in \gS \times \gA$. Furthermore, in \Cref{lem: policy_improvement} we show that the policy is monotonously improving. Therefore, the sequence of policy updates converges to a policy $\vpi^* \in \Pi$ such that
$J^{\vpi^{*}}(\vpi^*) \geq J^{\vpi^{*}}(\vpi)$
for all $\vpi \in \Pi$.
Applying the same argument as \citet[Theorem 1]{haarnoja2018soft}, we get $Q^{\vpi^{*}}(\vs, \va) \geq  Q^{\vpi}(\vs, \va)$ for all $\vpi \in \Pi$, $(\vs, \va) \in \gS \times \gA$.
\end{proof}
\section{\alg from the perspective of KL-minimization} \label{sec: kl minimization}
We take inspiration from \cite{hafner2020action} and study the Boltzmann exploration formulation from~\cref{subsec:boltzmanexplore} from the action perception and divergence minimization (APD) perspective. We denote with $\vtau = \{\vs_{t}, \va_{t}, r_{t}\}_{t\geq0}$ the trajectory in state, reward, and action space. 
The goal of the RL agent is to maximize the cumulative reward w.r.t.~the true system $\vf^*$. Therefore, a common and natural choice for a target/desired distribution for the trajectory $\vtau$ is~\citep{levine2018reinforcement, hafner2020action}
    \begin{equation*}
    p^*(\vtau) = \left[\prod_{t\geq0} p(\vs_{t+1}, r_t|\vs_t, \va_t, \vf^*) \right]\exp\left(\frac{1}{\alpha}\sum_{t\geq0} r_t\right).
\end{equation*}

However, $\vf^*$ is unknown in our case and we only have an estimate of the underlying distribution $p(\vf)$ from which it is sampled. 
Given a state-pair $(\vs_{t}, \va_t)$, we get the following distribution for $(\vs_{t+1}, r_t)$.
\begin{equation}
    p(\vs_{t+1}, r_t|\vs_t, \va_t) = \int p(\vs_{t+1}, r_t|\vs_t, \va_t, \vf) dp(\vf) = E_{\vf}\left[p(\vs_{t+1}, r_t|\vs_t, \va_t, \vf) \right]. 
\end{equation}
For a given policy $\vpi$, we can write the distribution of its trajectory via $\vtau^{\vpi}$ as
\begin{align}
    \hat{p}^{\vpi}(\vtau) &= \left[\prod_{t\geq0}  p(\vs_{t+1}, r_t|\vs_t, \va_t) \vpi(\va_t|\vs_t) \right] \notag \\
    &= p(\vs_1) \left[\prod_{t\geq0} E_{\vf}\left[p(\vs_{t+1}, r_t|\vs_t, \va_t, \vf) \right] \vpi(\va_t|\vs_t) \right].
\end{align}

A natural objective for deciding which policy to pick is to select $\vpi$ such that its resulting trajectory $\vtau^{\vpi}$ is close (in-distribution) to $\vtau^*$. That is,
\begin{equation}
  \vpi^* = \argmin_{\vpi}  \KL(\hat{p}^{\vpi}(\vtau) | p^*(\vtau)). \label{eq:policy_opt_max_ent_persp1}
\end{equation}
In the following, we break down the term from~\cref{eq:policy_opt_max_ent_persp1} and show that it results in the \alg objective.

\begin{align*}
    \KL(\hat{p}^{\vpi}(\vtau) | p^*(\vtau)) &= \E_{\vtau \sim \hat{p}^{\vpi}(\vtau)} \left[ - \log\left(\frac{p^*(\vtau)}{\hat{p}^{\vpi}(\vtau)}\right) \right] \\
    &= - \E_{\vtau \sim \hat{p}^{\vpi}(\vtau)}\left[  \sum_{t\geq0} \log(p(\vs_{t+1}, r_t|\vs_t, \va_t, \vf^*)) + \frac{1}{\alpha}\sum_{t\geq 0} r_t\right] \\
    &+ \E_{\vtau \sim \hat{p}^{\vpi}(\vtau)}\left[\sum_{t\geq0} \log\left(E_{\vf}\left[p(\vs_{t+1}, r_t|\vs_t, \va_t, \vf) \right]\right) + \log(\vpi(\va_t|\vs_t)) \right] \\
    &= \E_{\vtau \sim \hat{p}^{\vpi}(\vtau)}\left[\sum_{t\geq0} \log(\vpi(\va_t|\vs_t)) - \frac{1}{\alpha} r_t\right] \tag{1} \\
    &+ \E_{\vtau \sim \hat{p}^{\vpi}(\vtau)}\left[\sum_{t\geq0}\log\left(E_{\vf}\left[p(\vs_{t+1}, r_t|\vs_t, \va_t, \vf) \right]\right) - \log(p(\vs_{t+1}|\vs_t, \va_t, \vf^*)) \right] \tag{2}
\end{align*}
The term (1) is commonly minimized in max entropy RL methods such as \cite{haarnoja2018soft}. 
The term (2) has a very natural interpretation, which we discuss below
\begin{align*}
    &\E_{\vtau \sim \hat{p}^{\vpi}(\vtau)}\left[\sum_{t\geq0} \log\left(E_{\vf}\left[p(\vs_{t+1}, r_t|\vs_t, \va_t, \vf) \right]\right) - \log(p(\vs_{t+1}, r_t|\vs_t, \va_t, \vf^*)) \right] \\
    &= \E_{\vtau \sim \hat{p}^{\vpi}(\vtau)}\left[\sum_{t\geq 0}\log\left(p(\vs_{t+1}, r_t|\vs_t, \va_t) \right) - \log(p(\vs_{t+1}, r_t|\vs_t, \va_t, \vf^*))\right] \\
    &= \E_{\vtau \sim \hat{p}^{\vpi}(\vtau)}\left[\sum_{t\geq 0}\log\left(p(\vs_{t+1}, r_t|\vs_t, \va_t) \right) - \log(p(\vw_{t}))\right] \\
    &= \E_{\vtau \sim \hat{p}^{\vpi}(\vtau)}\left[\sum_{t\geq 0}
    H(\vw_t) -  H(\vs_{t+1}, r_t|\vs_t, \va_t).
    \right] \\
    &= \E_{\vtau \sim \hat{p}^{\vpi}(\vtau)}\left[\sum_{t\geq 0}
    H(\vs_{t+1}, r_t|\vs_t, \va_t, \vf) -  H(\vs_{t+1}, r_t|\vs_t, \va_t).
    \right]
\end{align*}
Here in the last step, we applied the equality that the entropy of $\vs_{t+1}, r_t$ given the previous state, input, and dynamics is the same as the entropy of $\vw_{t}$. In summary, we get
\begin{align*}
    &\E_{\vtau \sim \hat{p}^{\vpi}(\vtau)}\left[\sum_{t\geq 0} \log\left(E_{\vf}\left[p(\vs_{t+1}, r_t|\vs_t, \va_t, \vf) \right]\right) - \log(p(\vs_{t+1}, r_t|\vs_t, \va_t, \vf^*)) \right] \\
    &= \E_{\vtau \sim \hat{p}^{\vpi}(\vtau)}\left[\sum_{t\geq 0}
    -I(\vs_{t+1}, r_t; \vf |\vs_t, \va_t)\right].
\end{align*}

Therefore \cref{eq:policy_opt_max_ent_persp1} can be rewritten as
\begin{equation*}
      \vpi^* = \argmax_{\vpi}  \E_{\vtau \sim \hat{p}^{\vpi}(\vtau)}\left[\sum_{t\geq 0} r_t  - \alpha \log(\vpi(\vu_t|\vz_t)) + \alpha I(\vs_{t+1}, r_t; \vf |\vs_t, \va_t)\right].
\end{equation*}

Intuitively,  to bring our estimated distribution of the policy $\hat{p}^{\vpi}(\vtau)$ closer to the desired distribution $p^*(\vtau)$, we need to pick policies that maximize the underlying rewards while also gathering more information about the unknown elements of the MDP; the reward and dynamics model. While the former objective drives us in areas of high rewards under our estimated model, the latter objective ensures that our model gets more accurate over time.

\section{Additional Experiments} \label{sec: additional experiments}
We present additional experiments and ablations in this section.

\textbf{Covergence of intrinsic reward coefficient $\lambda$ for \algsac}:
In \cref{fig:saceipo weights} we report the tuned intrinsic reward coefficient $\lambda$ for \textsc{SACEipo} and \algsac ($\alpha_2$ for \algsac). As shown in \cref{fig:maxinforl_action_cost} in \cref{sec: experiments}, \textsc{SACEipo} obtains suboptimal performance compared to \algsac on the Pendulum, CartPole, and Walker tasks. We ascribe this to its tendency to underexplore, i.e. $\lambda  \to 0$ much faster and converges to the local optima. \Cref{fig:saceipo weights} validates our intuition since the intrinsic reward coefficient quickly decays to zero for \textsc{SACEipo} whereas for \algsac it does not.
\begin{figure}[th]
    \centering
    \includegraphics[width=\linewidth]{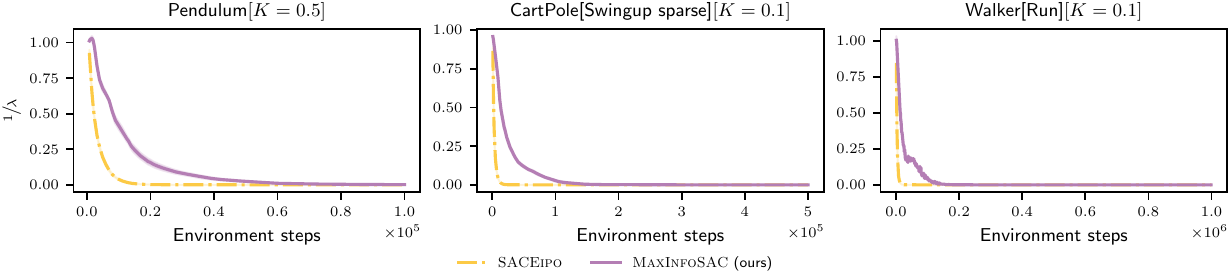}
    \caption{Evolution of the intrinsic reward coefficient $\lambda$ of \textsc{SACEipo} and \algsac over environment interaction. }
    \label{fig:saceipo weights}
\end{figure}

\textbf{\alg with REDQ}: In \cref{fig:maxinforl_redq} we combine \alg with REDQ. In all our tasks, REDQ does not obtain high rewards, we believe this is due to the high update to data ratio, which leads to the Q function converging to a sub-optimal solution. However, when combined with \alg, the resulting algorithm obtains considerably better performance and sample-efficiency. 
\begin{figure}[th]
    \centering
    \includegraphics[width=\linewidth]{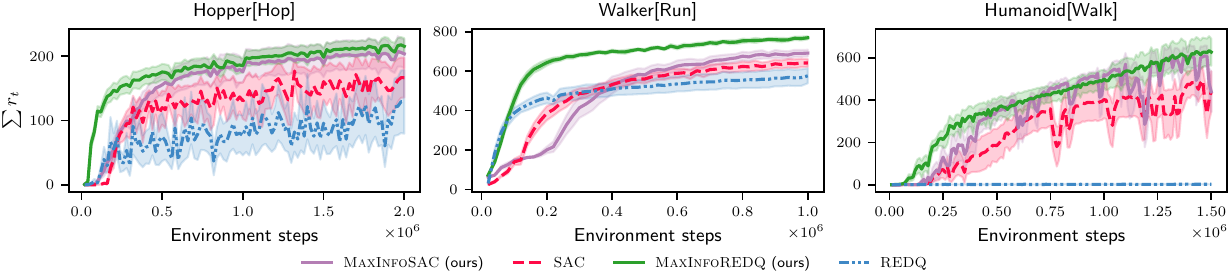}
    \caption{We combine \alg with REDQ (\textsc{MaxInfoREDQ}) and report the learning curves of SAC, REDQ, \algsac, and \textsc{MaxInfoREDQ}.}
    \label{fig:maxinforl_redq}
\end{figure}

\textbf{\alg with DrQv2 on additional DMC tasks}:
In \cref{fig:maxinforl_ddpg_ablation} we compare \algdrqddpg with DrQv2 and \algdrq. For \algdrqddpg, we use a fixed noise variance with \mbox{$\sigma=0.2$} and $\sigma=0.0$. We observe that on all tasks, \algdrqddpg performs better than DrQv2, even when no noise is added for exploration. This empirically demonstrates that already by just maximizing the information gain, we get good exploration.
\begin{figure}[th]
    \centering
    \includegraphics[width=\linewidth]{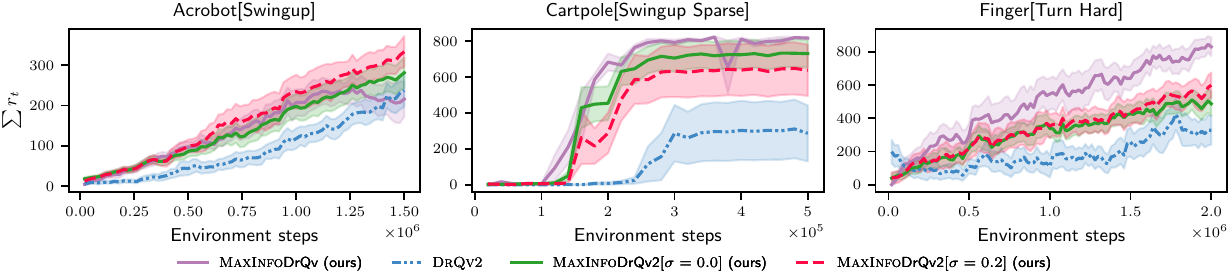}
    \caption{Learning curves of \algdrqddpg with different noise levels $\sigma \in \{0.0, 0.2\}$ compared to DrQv2 and \algdrq.}
    \label{fig:maxinforl_ddpg_ablation}
\end{figure}

In \cref{fig: maxinforl_ddpg_no_noise}, we evaluate the exploration behavior of \algdrqddpg without action noise on the DMC Humanoid Walk task. The figure shows that even without noise, \algdrqddpg outperforms DrQv2, while the noise-free DrQv2 struggles to achieve high rewards.
\begin{figure}[th]
    \centering
    \includegraphics[width=0.6\linewidth]{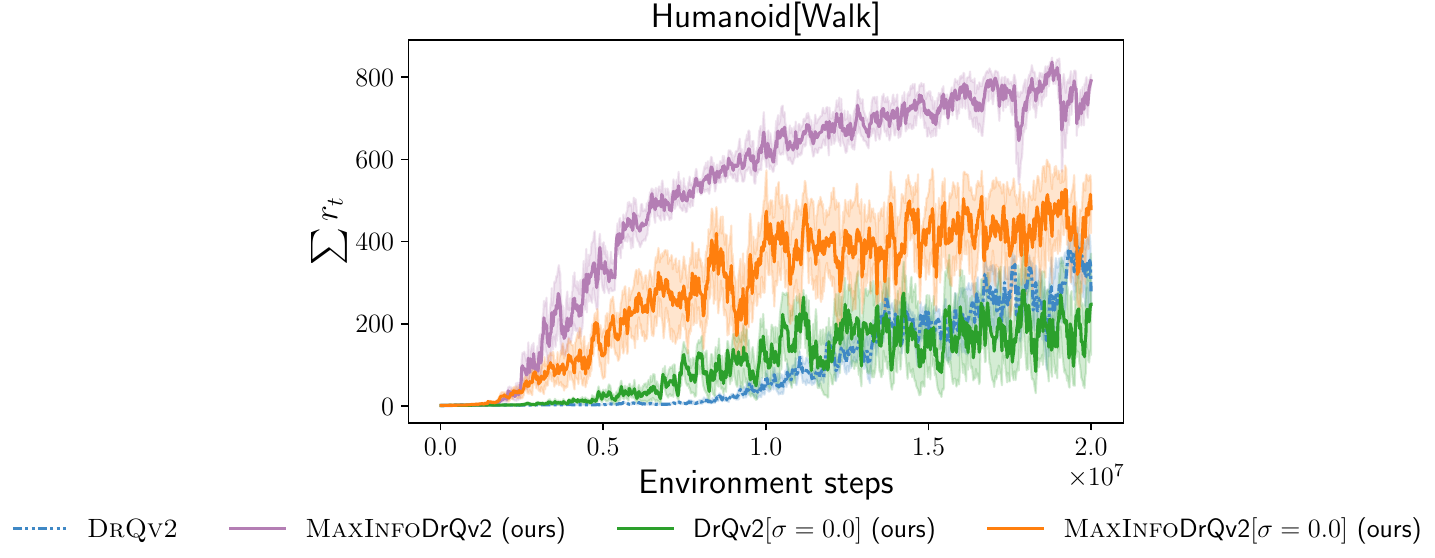}
    \caption{\algdrqddpg evaluated on the humanoid walk task with no action noise.}
    \label{fig: maxinforl_ddpg_no_noise}
\end{figure}

\textbf{\algsac with different intrinsic reward}:
We evaluate \algsac with RND as intrinsic reward. Moreover, we initialize a random NN model and train another model to predict the output of the random model as proposed in \citet{burda2018exploration}. We train an ensemble of NNs to learn the random model and use their disagreement as the intrinsic reward\footnote{\citet{burda2018exploration} use curiosity as the intrinsic reward but we observed that disagreement performed much better and robustly in our experiments.}. Effectively, we replace the information gain term in \cref{eq:vf_soft_q_updated} with the intrinsic reward derived from RND. In \cref{fig:maxinforl_rnd} we report the learning curves. From the figure, we conclude that \alg also works with RND as the exploration bonus. 
Moreover, while the standard \algsac performs better on these tasks, \algsac with RND performs significantly better than SAC.
This illustrates the generality of our approach.
\begin{figure}[th]
    \centering\includegraphics[width=\linewidth]{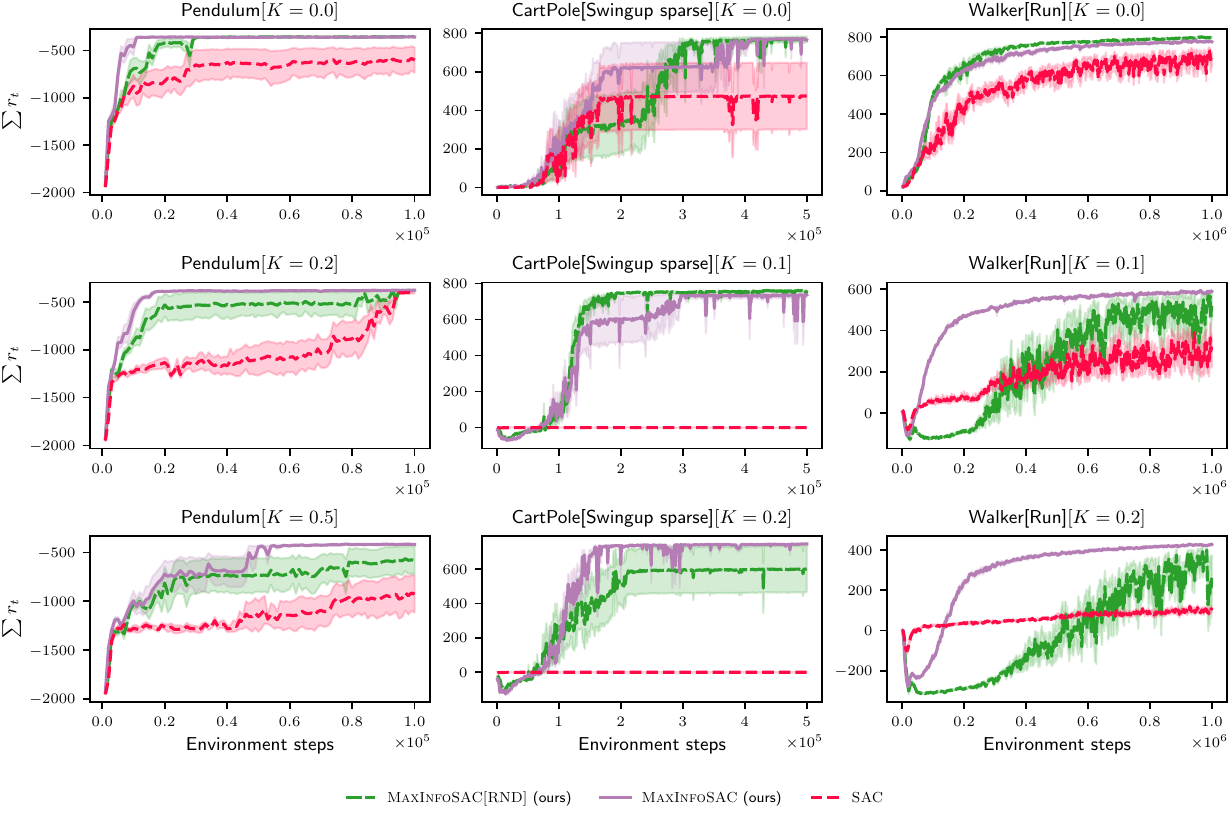}
    \caption{Learning curves of \algsac with RND as the intrinsic reward, instead of the information gain, compared to SAC and standard \algsac.}
    \label{fig:maxinforl_rnd}
\end{figure}

\textbf{Experiments with $\epsilon$--\alg}
We also evaluate our $\epsilon$--greedy version from \cref{sec: eps max info}, $\epsilon$--\alg, empirically. For the intrinsic reward, we use the disagreement~\citep{pathak2019self} in the ensemble models of $\vf^*$ and as the base algorithm, we use SAC.
In \cref{fig:maxinforl_eps_greedy} we compare it to \algsac and SAC and report the learning curves. We also evaluate $\epsilon$--\alg with different action costs, as in \cref{sec: experiments}, in \cref{fig:maxinforl_eps_greedy_action_cost}. For $\epsilon$ we specify a linear decay schedule (see~\cref{sec: Experiment details}). In the figures, we observe that $\epsilon$--\alg performs better than SAC and often on par to \algsac. However, $\epsilon$--\alg requires training two different actor-critic networks and manually specifying the schedule for $\epsilon$, as opposed to \algsac which automatically trades off the extrinsic and intrinsic rewards.
\begin{figure}[th]
    \centering
    \includegraphics[width=\linewidth]{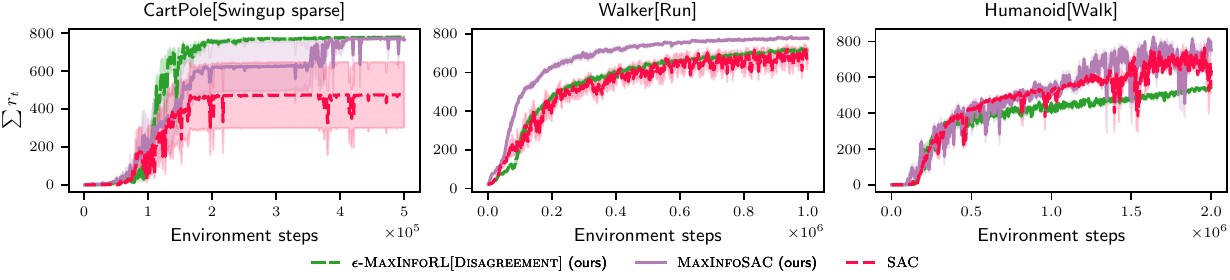}
    \caption{Learning curves of $\epsilon$--\alg with disagreement as the intrinsic reward, SAC and \algsac.}
    \label{fig:maxinforl_eps_greedy}
\end{figure}

\begin{figure}[th]
    \centering
    \includegraphics[width=\linewidth]{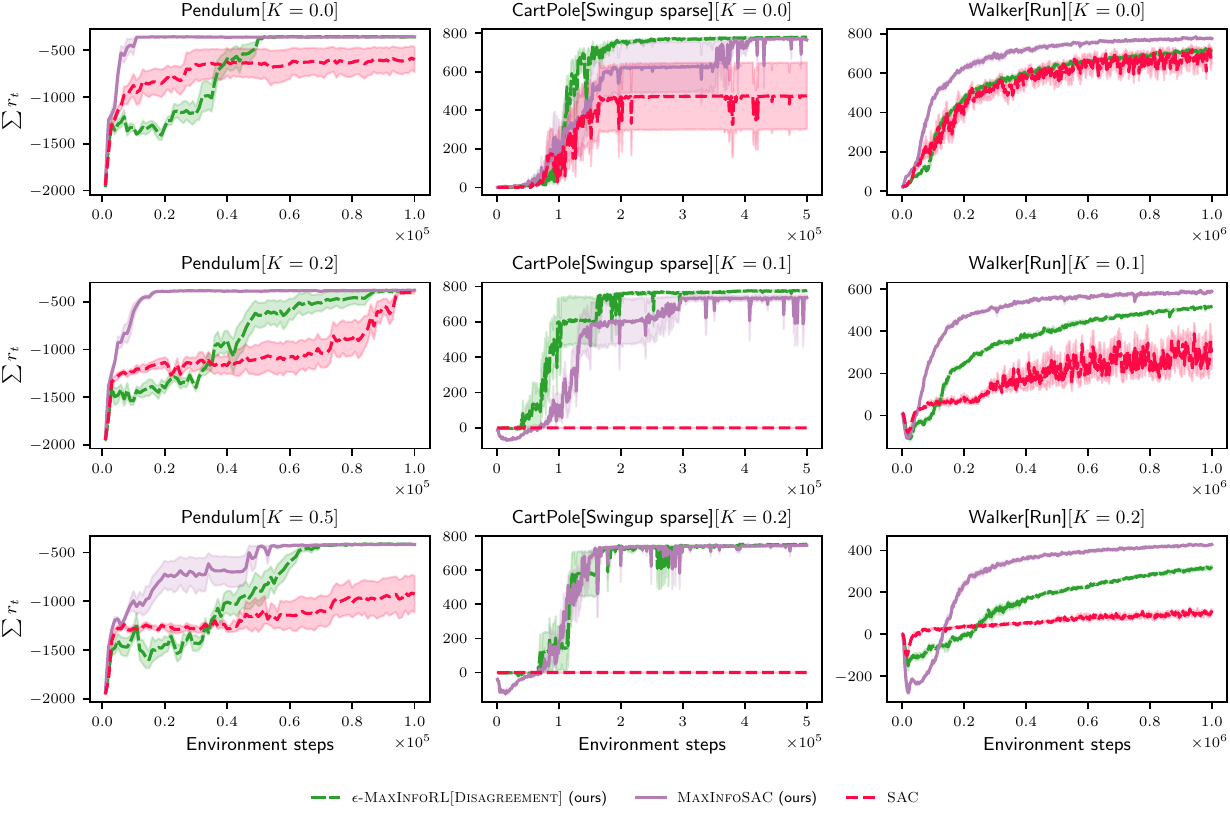}
    \caption{Learning curves of $\epsilon$--\alg with disagreement as the intrinsic reward, SAC and \algsac for varying levels of action costs $K$.}
    \label{fig:maxinforl_eps_greedy_action_cost}
\end{figure}

\textbf{Experiments with $\epsilon$--\alg and Curiosity}
In \cref{fig:maxinforl_eps_greedy_action_cost_curiosity} we compare $\epsilon$--\alg with disagreement and curiosity as intrinsic rewards. We observe that overall both perform better than SAC, with disagreement obtaining slightly better performance than curiosity.
\begin{figure}[th]
    \centering
    \includegraphics[width=\linewidth]{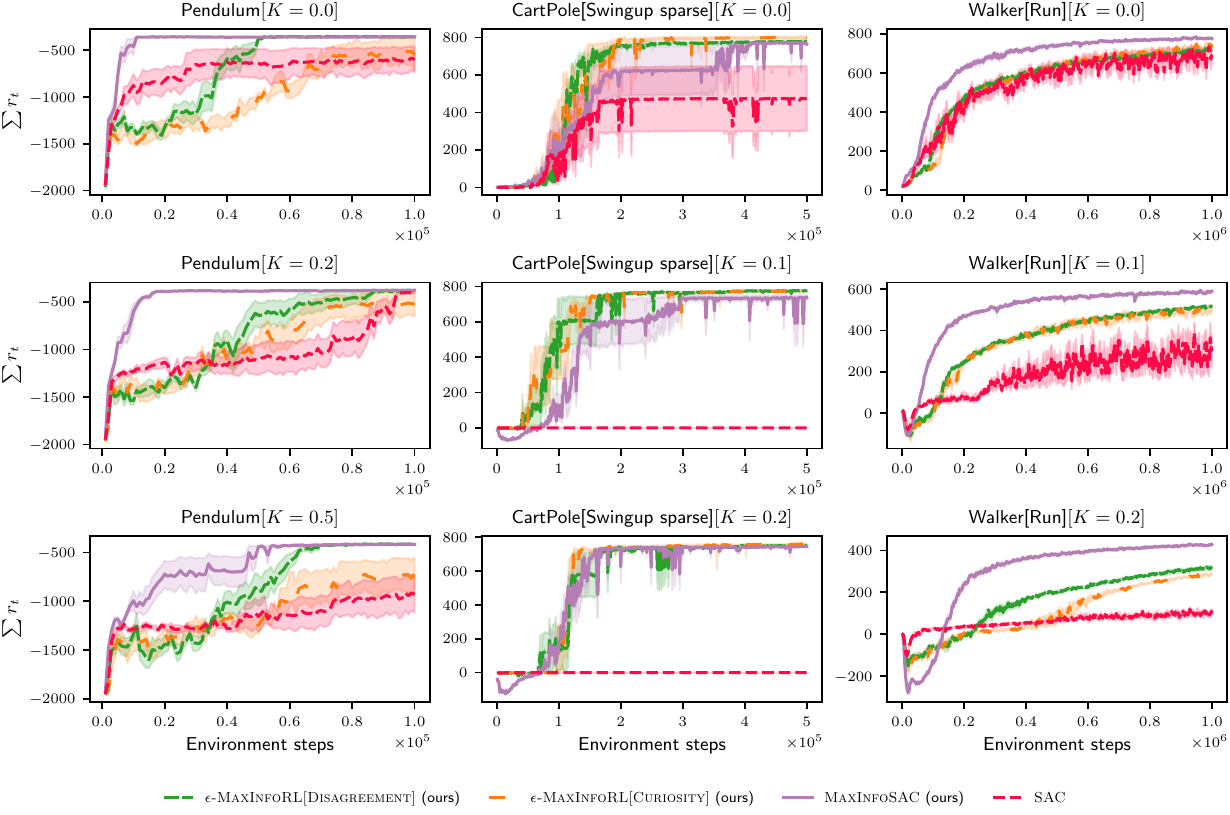}
    \caption{Learning curves of $\epsilon$--\alg with disagreement and curiosity as the intrinsic reward compared with SAC.}
    \label{fig:maxinforl_eps_greedy_action_cost_curiosity}
\end{figure}

\textbf{Experiments with $\epsilon$--\alg with a switching frequency of one between policies.}
\looseness=-1
In all the $\epsilon$--\alg experiments above, we switched between the extrinsic and intrinsic policies every 32 steps. 
Note that our proposed algorithm from \cref{sec: eps max info} is still the same and we simply use a different schedule for the $\epsilon$ parameter.
This is because we believe that instead of switching after every time step, following the extrinsic or intrinsic policies for several steps helps in exploration since we can collect longer trajectories of explorative data. 
In
\cref{fig:maxinforl_eps_greedy_action_cost_one_step} we test this hypothesis on the pendulum environment, where we observe that indeed the switching frequency of 32 steps performs better.
\begin{figure}[th]
    \centering
    \includegraphics[width=\linewidth]{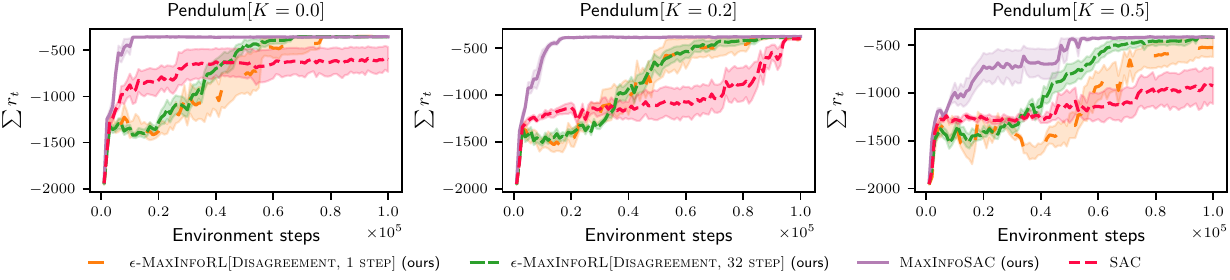}
    \caption{Learning curves of $\epsilon$--\alg with disagreement. We compare a version of $\epsilon$--\alg which switches between extrinsic and intrinsic policy every step with one which switches every 32 steps.}
    \label{fig:maxinforl_eps_greedy_action_cost_one_step}
\end{figure}

\textbf{Experiments with OAC~\citep{ciosek2019better}.}
In \cref{fig:maxinforl_oac_action_cost}, we compare OAC~\citep{ciosek2019better} an optimism based actor-critic algorithm with \algsac. Furthermore, we also test a \alg version of OAC, called \textsc{MaxInfoOAC}. As depicted in the figure, we observe that while OAC performs better than SAC, the \alg perform the best.
\begin{figure}[th]
    \centering
    \includegraphics[width=\linewidth]{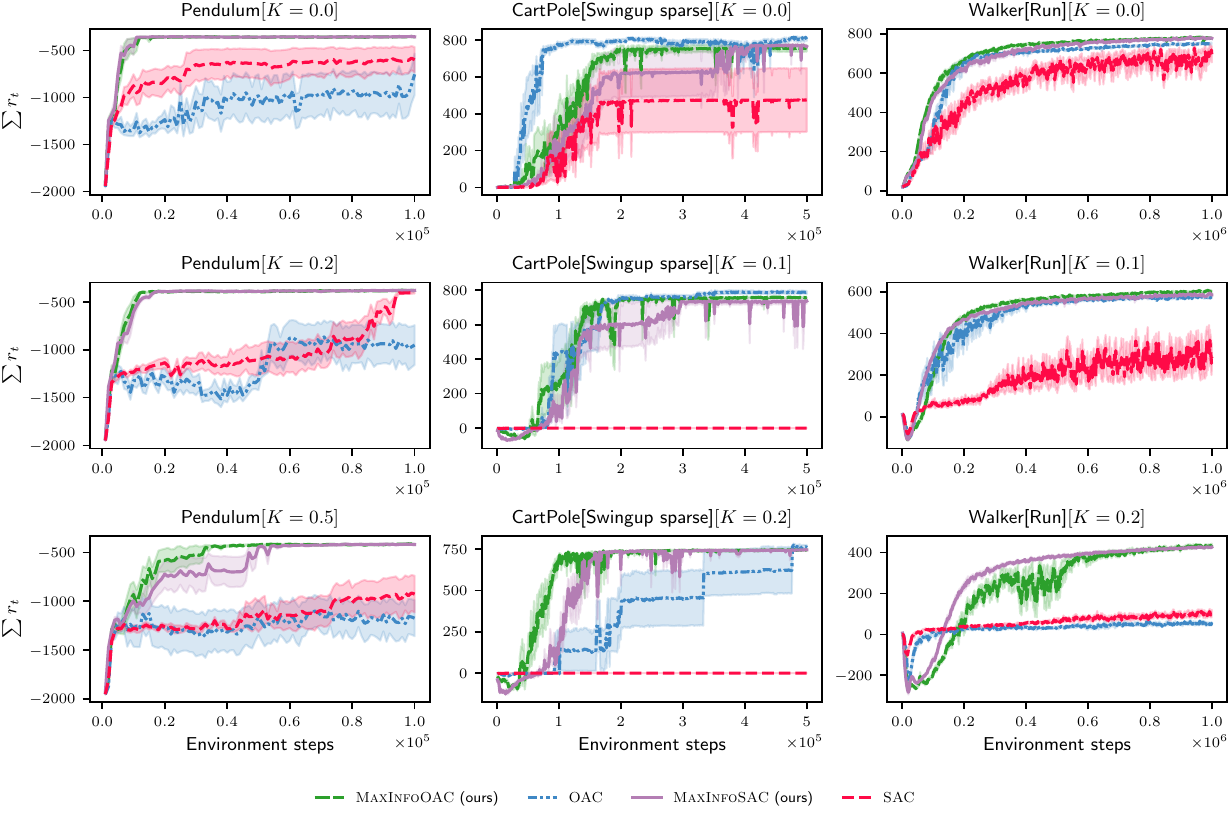}
    \caption{Learning curves of OAC compared with \algsac and a \alg version of OAC (\textsc{MaxInfoOAC}).}
    \label{fig:maxinforl_oac_action_cost}
\end{figure}

\textbf{Experiments with DrM~\citep{xu2023drm}.}
In \cref{fig:maxinforl_drm}, we evaluate DrM~\citep{xu2023drm} another exploration algorithm for visual control tasks. DrM uses dormant ratio-guided exploration to solve challenging visual control problems. In our experiments, we evaluate the version of DrM without the expectile regression and use the same exploration schedule proposed in the paper.
We combine DrM with \alg and report the performance on the humanoid and dog tasks from DMC. 
\begin{figure}[th]
    \centering
    \includegraphics[width=\linewidth]{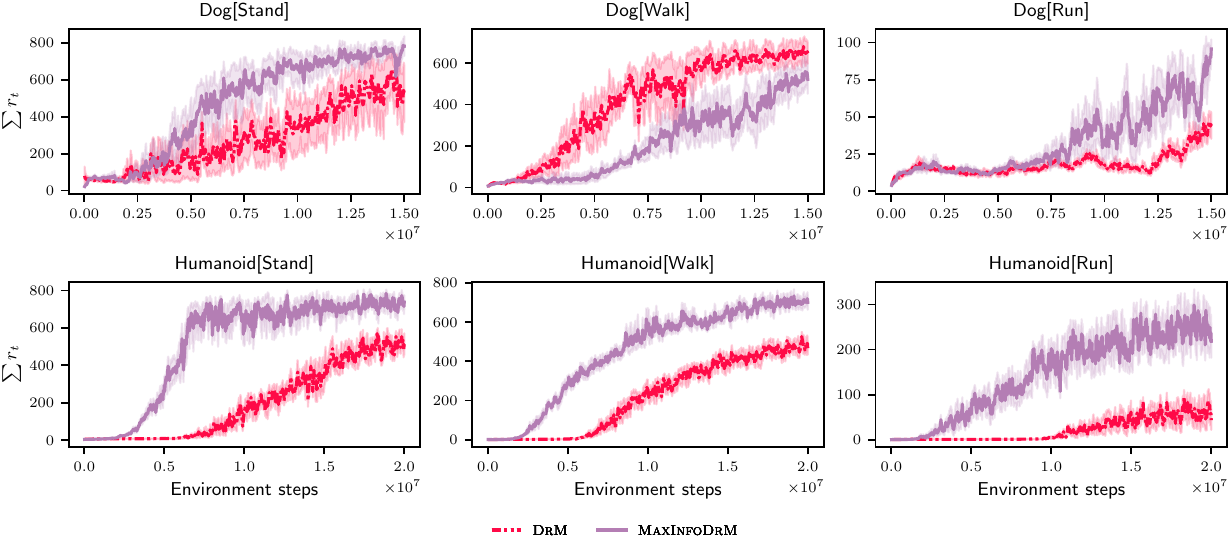}
    \caption{Learning curves of DrM compared with our version of DrM (\textsc{MaxInfoDrM}).}
    \label{fig:maxinforl_drm}
\end{figure}
\section{Experiment Details} \label{sec: Experiment details}
In this section, we present the experiment details of \alg and our baselines.

\paragraph{\algsac}
We train an ensemble of $P$ deterministic neural networks with mean squared error to predict the next state\footnote{Specifically we predict $\vs_{t+1}-\vs_t$.} and rewards. We evaluate the disagreement with $\vsigma_n (\vs, \va) = \text{Var}(\{\vf_{\phi_i}(\vs, \va)\}_{i \in \{1, \dots, P\}})$, where $\{\phi_i\}_{i \in \{1, \dots, P\}}$ are the parameters of each ensemble member. We quantify the information gain using the upper bound in \Cref{eq: entropy bound} and for the aleatoric uncertainty, we use $\sigma = 10^{-3}$. In principle, we could learn the aleatoric uncertainty, however, we found that this approach already works robustly. Lastly, as suggested by \citet{burda2018exploration} we normalize the intrinsic reward/information gain term when training the critic. 

A limitation of our work is that it requires training an ensemble of forward dynamics models to quantify the epistemic uncertainty. This is highlighted in \cref{tab:compute cost} where we compare the training time required to train SAC with \algsac. We report the numbers for our Stable-Baselines 3 (SB3)~\citep{stable-baselines3} (torch-based) implementation and JaxRL~\citep{jaxrl} (Jax-based) implementation.
\begin{table}[ht]
\centering
    \caption{Computation cost comparison for \algsac on NVIDIA GeForce RTX 2080 Ti}
    \label{tab:compute cost}
\begin{tabular}{l|l}
Algorithm & Training time for 100k environment interactions \\ \hline \
SAC (SB3)       & 16.96 min +/- 1.64317 min                       \\
\algsac (SB3)   & 39 min +/- 1 min                                \\
SAC (JaxRL)      & 5.6 min +/- 0.2min                              \\
\algsac (JaxRL)   & 7.3 min +/- 0.75                               
\end{tabular}
\end{table}

As the number suggests \algsac requires more time to train due to the additional computational overhead from learning the dynamics model. However, our JaxRL-based implementation already bridges the computational gap and is much faster. To further reduce the computational cost, we can consider cheaper methods for uncertainty quantification such as \citet{osband2023epistemic}.

\paragraph{Explore then exploit}
\looseness=-1
We train two different actor-critics, one for the extrinsic and one for the intrinsic reward. For the first $25\%$ of the episodes, we only use the intrinsic actor for data collection. For the remaining $75\%$, the extrinsic actor is used. As the base actor-critic algorithm we use SAC. Both the intrinsic and extrinsic actor-critics share the same data buffer. Therefore, the data collected during the exploration phase is also used to train the policy for the exploitation phase. For the intrinsic reward, we use the disagreement calculated as specified above for \algsac. We also evaluate this approach with curiosity as the intrinsic reward. Here we train a neural network model to predict the next state and reward and use the mean squared error in the prediction as the intrinsic reward.

\paragraph{\textsc{SACIntrinsic}}
Here we follow the same procedure as proposed in \citet{burda2018exploration}. We train two critics, one for the extrinsic reward and another one for the intrinsic rewards/exploration bonuses. We use the (normalized) information gain and policy entropy as the exploration bonuses. As suggested in \citet{burda2018exploration} the non-episodic returns are used in training the intrinsic critics. The policy is trained to maximize $\E_{\vs \sim \gD}[\E_{\va \sim \vpi(\cdot|\vs)}[Q^{\vpi_{\text{old}}}_{\text{extrinsic}}(\vs, \va) + Q^{\vpi_{\text{old}}}_{\text{intrinsic}}(\vs, \va)]]$, where $\gD$ is the data buffer. 

\paragraph{\textsc{SACEipo}}
We follow the same procedure as in \textsc{SACIntrinsic} in training the critics. For the policy, we maximize a weighted sum of the extrinsic and intrinsic critics, i.e., the policy $\vpi$ is trained to maximize $\E_{\vs \sim \gD}[\E_{\va \sim \vpi(\cdot|\vs)}[\lambda Q^{\vpi_{\text{old}}}_{\text{extrinsic}}(\vs, \va) +Q^{\vpi_{\text{old}}}_{\text{intrinsic}}(\vs, \va)]]$. We determine $\lambda$ using the extrinsic optimality constraint from \citet{chen2021randomized}. For the policy $\vpi$, the extrinsic optimality constraint is given by
\begin{equation*}
    \E_{\vs \sim \gD}[\E_{\va \sim \vpi(\cdot|\vs)}[Q^{\vpi}_{\text{extrinsic}}(\vs, \va)]] - \max_{\vpi' \in \Pi}\E_{\vs \in \gD}[\E_{\va \sim \vpi'(\cdot|\vs)}[Q^{\vpi'}_{\text{extrinsic}}(\vs, \va)]] = 0
\end{equation*}
Effectively, we encourage the policy $\vpi$ that maximizes the intrinsic and extrinsic rewards performs at least as well as the policy which only maximizes the extrinsic ones. 
To evaluate this constraint, we train another actor and critic that greedily maximizes the extrinsic reward only, i.e., we use the data buffer to learn the solution $\pi^*_E$ for $\max_{\vpi' \in \Pi}\E_{\vs \in \gD}[\E_{\va \sim \vpi'(\cdot|\vs)}[Q^{\vpi'}_{\text{extrinsic}}(\vs, \va)]]$. 
\begin{equation*}
    L(\lambda) = \lambda \left(\E_{\va \sim \vpi(\cdot|\vs)}[Q^{\vpi}_{\text{extrinsic}}(\vs, \va)]] - \E_{\vs \sim \gD}[\E_{\vs \in \gD}[\E_{\va \sim \pi^*_E(\cdot|\vs)}[Q^{\pi^*_E}_{\text{extrinsic}}(\vs, \va)]]\right)
\end{equation*}
Intuitively, if our constraint is not satisfied, i.e., $\E_{\vs \sim \gD}[\E_{\vs \in \gD}[\E_{\va \sim \pi^*_E(\cdot|\vs)}[Q^{\pi^*_E}_{\text{extrinsic}}(\vs, \va)]] > \E_{\vs \sim \gD}[\E_{\va \sim \vpi(\cdot|\vs)}[Q^{\vpi}_{\text{extrinsic}}(\vs, \va)]]$, we increase $\lambda$, effectively focusing more on the extrinsic reward. We update $\lambda$ with gradient descent on $L(\lambda)$, akin to the temperature parameters in \algsac.

\paragraph{\algdrq}
We add our information gain on top of DrQ using the same architecture and image augmentations from \citet{yarats2021image}. The policy and critic updates are identical to \algsac. However, since we deal with images, we train an ensemble of $P$ deterministic neural networks with mean squared error to predict the next state $\vs_{t+1}$, rewards $r_t$, and a compressed image embedding $e_t$ which we obtain from the encoder for the observation $o_t$. Effectively, we compress the output of the encoder into a $d_e$ dimensional embedding using max pooling and use that as a label for our ensemble training. Crucially, we want the ensemble to learn and explore all unknown components of the MDP, the dynamics, rewards and observation model. For the state $\vs$, we use the learned representation from the target critic. 

\paragraph{\algdrqddpg}
We use the same procedure for training the ensemble model as \algdrq. 
For \algdrqddpg, we separate the intrinsic and extrinsic reward critic. This is primarily because DrQv2 uses $n$--step returns for training the extrinsic critic. By separating the two critics, we can train the extrinsic critic with the $n$--step returns and the intrinsic with the standard $1$--step ones. This makes implementing the algorithm easier.

\paragraph{\textsc{MaxInfoSac[RND]}}
In \cref{fig:maxinforl_rnd} we evaluate \algsac with RND as intrinsic reward. We initialize a target neural network to predict an output embedding $\vy$ given $(\vs, \va)$. We train an ensemble of neural networks to learn the target network and use the disagreement as the intrinsic reward. The remaining training procedure is identical to \algsac.

\paragraph{$\epsilon$--\alg}
We also evaluate the $\epsilon$--\alg algorithm from \Cref{eq: eps_greedy}. We train two actor-critics similar to the explore and then exploit baselines. However, instead of exploring for a fixed amount of iterations initially, we specify a probability $\epsilon_t$ to alternate between the exploitation and exploration phases. Furthermore, changing the between policies after every time-step can lead to noisy trajectories. Therefore, we choose a frequency $\mathit{f} = 32$ for the switches. That is, we change the sampling strategy (explore or exploit) after a minimum of $\mathit{f} steps$.

\subsection{Algorithms}
We provide more details about \algsac, \algdrq, and \algdrqddpg. 
All three are off-policy actor-critic algorithms and learn two critics $Q_{\theta_1}$ and $Q_{\theta_2}$ to avoid overestimation bias. From here on, we denote with $\psi$ the parameters of the policy and $\bar{\theta}_{1, 2}, \bar{\psi}$ the parameters of the target critics and policy. 
\paragraph{Losses for \algsac and \algdrq}
We sample a trajectory $\vtau$ from $\gD$ and use the following for the critic and policy losses.

Critic loss:
\begin{align}
    J_{Q}(\theta) &= \frac{1}{2}\E_{\vtau \sim \gD}[(Q_{\theta}(\vs, \va) - y)^2] \notag \\
     y &= r + \gamma \min_{k \in \{1, 2\}}Q_{\Bar{\theta}_k}(\vs', \va') - \alpha_1 \log\vpi_{\psi}(\cdot|\vs') + \alpha_2 I_u(\vs', \va') \label{eq: critic loss sac}. 
\end{align}
where $\va' \sim \vpi_{\psi}(\cdot|\vs')$.

Policy loss:
\begin{align}
    J_{\vpi}(\psi) &= \E_{\vs \sim \gD}[\E_{\va \sim \vpi_{\psi}}[\alpha_1 \log\vpi_{\psi}(\cdot|\vs)  - \alpha_2 I_u(\vs, \va) - \min_{k \in \{1, 2\}}Q_{\theta_k}(\vs, \va)]]
     \label{eq: policy loss sac}. 
\end{align}

Entropy coefficient loss;
\begin{equation}
    J(\alpha_1) = \E_{\vs \sim \gD}[-\alpha_1 (\log\vpi_{\psi}(\cdot|\vs) + \Bar{H})]
     \label{eq: entropy loss sac}. 
\end{equation}

\paragraph{Critic loss for \algdrqddpg}
For \algdrqddpg we train separate critics for the intrinsic and extrinsic rewards. This allows us to use $n$--step returns for the extrinsic rewards while keeping a simple implementation, using $1$--step returns, for the intrinsic ones. Effectively, for computational efficiency, we adapt the critic network to output two heads, one for the extrinsic and one for the intrinsic reward.

Critic loss:
\begin{align}
    J_{Q}(\theta) &= \frac{1}{2}\E_{\vtau \sim \gD}[(Q^{\text{extrinsic}}_{\theta}(\vs, \va) - y^{\text{extrinsic}})^2 +  (Q^{\text{intrinsic}}_{\theta}(\vs, \va) - y^{\text{intrinsic}})^2] \notag \\
     y^{\text{extrinsic}} &= \sum^{n}_{l=0} \gamma^{l} r_{l} + \gamma^n \min_{k \in \{1, 2\}}Q^{\text{extrinsic}}_{\Bar{\theta}_k}(\vs_{n}, \va_{n}) \notag \\
     y^{\text{intrinsic}} &=I_u(\vs, \va) + \gamma \min_{k \in \{1, 2\}}Q^{\text{intrinsic}}_{\Bar{\theta}_k} (\vs', \va').  \label{eq: loss critic drqv2}
\end{align}
where $\va_n \sim \vpi_{\psi}(\cdot|\vs_n)$ and $\va' \sim \vpi_{\psi}(\cdot|\vs')$.

Policy loss:
\begin{align}
    J_{\vpi}(\psi) &= \E_{\vs \sim \gD}\left[\E_{\va \sim \vpi_{\psi}}\left[- \left(\min_{k \in \{1, 2\}}Q_{\theta_k}^{\text{extrinsic}}(\vs, \va)  + \alpha_2 \min_{k \in \{1, 2\}}Q_{\theta_k}^{\text{intrinsic}}(\vs, \va)\right)\right]\right]
     \label{eq: policy loss drqv2}. 
\end{align}

Loss for the information gain coefficient
\begin{equation}
    J(\alpha_2) = \alpha_2\left(\E_{\vs \sim \gD}[\E_{\va \sim \vpi_{\psi}}[I_u(\vs, \va)] - \E_{\bar{\va} \sim \bar{\vpi}_{\psi}}[I_u(\vs, \bar{\va})]]\right)
     \label{eq: infogain loss}. 
\end{equation}

Loss of ensemble model for \algsac. Let $\Phi = \{\phi_i\}_{i \in \{1, \dots P\}}$
\begin{align}
    J(\Phi) &= \E_{\vtau \sim \gD}\left[\sum^{P}_{i=1}\norm{\vf_{\phi_i}(\vs, \va) - y}^2\right] \\
    y & = \left[(\vs' - \vs)^{\top}, r\right]
     \label{eq: ensemble loss sac}. 
\end{align}

Loss of ensemble model for \algdrq and \algdrqddpg.
\begin{align}
    J(\Phi) &= \E_{\vtau \sim \gD}\left[\sum^{P}_{i=1}\norm{\vf_{\phi_i}(\vs, \va) - y}^2\right] \\
    y & = \left[\vs'^{\top}, r, e^{\top}\right]
     \label{eq: ensemble loss drq}. 
\end{align}

In \Cref{alg: main algorithm structure} we present the main structure of the algorithms. We omit algorithm-specific details of DrQ, DrQv2 such as noise scheduling and image augmentation in \Cref{alg: main algorithm structure} and refer the reader to the respective papers for those details~\citep{yarats2021image, yarats2021mastering}.
\begin{algorithm}[ht]
    \caption{Algorithm structure for \algsac, \algdrq, \algdrqddpg}
        \label{alg:safe-exploration}
    \begin{algorithmic}[]
        \State {\textbf{Init:}}{ $\theta_1$, $\theta_2$, $\psi$}
        \State $\bar{\theta}_1 \leftarrow \theta_1$, $\bar{\theta}_2 \leftarrow \theta_2$, $\bar{\psi} \leftarrow \psi$ \algorithmiccomment{Initialize target}
        \State $\gD = \emptyset$ 
         \For{iterations $n=1, \ldots, n$}
         \For{each environment step}
         \State $\va_t \sim \vpi_{\psi}(\cdot|\vs_t)$ \algorithmiccomment{Sample action}
         \State $\vs_{t+1}, r_t \sim p(\vs_{t+1}, r_t|\vs_t, \va_t)$ \algorithmiccomment{Observe state and reward}
         \State $\gD \leftarrow \gD \cup \{\vs_t, \va_t, \vs_{t+1}, \va_t\}$
         \EndFor
        \For{each gradient step}
        \State Update critics with stochastic gradient descent (SGD) on $J_{Q}(\vtheta_1) + J_{Q}(\vtheta_2)$
        \State Update critics with SGD on $J_{\vpi}(\psi)$
        \State Update temperatures with SGD on $J(\alpha_1), J(\alpha_2)$\footnote{$\alpha_1$ is not used in \algdrqddpg}
        \State Update ensemble with SGD on $J(\Phi)$.
        \State Update $\bar{\theta}_{1, 2}, \psi$ \algorithmiccomment{Policy update target updates}
        \EndFor
        \EndFor
    \end{algorithmic}
    \label{alg: main algorithm structure}
\end{algorithm}
\subsection{Hyperparameters}
\begin{table}[H]
\centering
    \caption{Hyperparameters for results in~\Cref{sec: experiments}.}
    \label{tab:environment_hyperparams}
\begin{adjustbox}{max width=\linewidth}\begin{threeparttable}
\begin{tabular}{|c|cccccc}
Environment    & \multicolumn{1}{c|}{Learning rate}                                                                   & \multicolumn{1}{c|}{\begin{tabular}[c]{@{}c@{}}Policy/Critic \\ Architecture\end{tabular}} & \multicolumn{1}{c|}{Model architcture} & \multicolumn{1}{c|}{\begin{tabular}[c]{@{}c@{}}Polyak\\ Coefficient\end{tabular}} & \multicolumn{1}{c|}{\begin{tabular}[c]{@{}c@{}}action \\ repeat\end{tabular}} & \begin{tabular}[c]{@{}c@{}}feature \\ and embedding dim\end{tabular} \\ \hline
               & \multicolumn{6}{c}{State Based Tasks}                                                                                                                                                                                                                                                                                                                                                                                                                                                 \\ \cline{2-7} 
DMC and Gym    & $3 \times 10^{-4}$                                                                                                 & $(256, 256)$                                                                                 &                                        & $0.005$                                                                             & \begin{tabular}[c]{@{}c@{}}$1$ for Gym\\ $2$ for DMC\end{tabular}                 & -                                                                    \\
Humanoid Bench & $5\times 10^{-5}$                                                                                                 &                                                                                            &                                        & $0.005$                                                                             &                                                                               & -                                                                    \\ \cline{2-7} 
               & \multicolumn{6}{c}{Visual Control tasks}                                                                                                                                                                                                                                                                                                                                                                                                                                              \\ \cline{2-7} 
DMC            & \begin{tabular}[c]{@{}c@{}}$3 \times 10^{-4}$ for \algdrq, else $10^{-4}$\\ $10^{-4}$ for the model and $\alpha_2$\end{tabular} & (256, 256)                                                                                 & (256, 256)                             & \begin{tabular}[c]{@{}c@{}}$0.005$ for \algdrq, \\ else $10^{-4}$ \end{tabular}             & 2                                                                             & \begin{tabular}[c]{@{}c@{}}Feature: 50\\ Embedding: 32\end{tabular}  \\
Humanoid and Dog tasks (DrM) & \begin{tabular}[c]{@{}c@{}}$8\times 10^{-5}$, $8\times10^{-5}$ for the model and $\alpha_2$\end{tabular}                     & $(1024, 1024)$                                                                               & $(1024, 1024)$                            & $0.01$                                                                              & $2$                                                                             & \begin{tabular}[c]{@{}c@{}}Feature: $100$\\ Embedding: $32$\end{tabular}
\end{tabular}
\end{threeparttable}\end{adjustbox}
\end{table}

\end{document}